\title{Expert Merging in Sparse Mixture of Experts with Nash Bargaining}
\definecolor{myblue}{rgb}{0.1,0.2,0.75}
\definecolor{lightblue}{HTML}{A6E5FF}
\newcommand\DoToC{%
  \startcontents
  \printcontents{}{1}{\textbf{Table of Contents}\vskip3pt\hrule\vskip5pt}
  \vskip3pt\hrule\vskip5pt
}
\newcommand{\tmn}[1]{{ #1}}
\theoremstyle{plain}
\newtheorem{thm}{Theorem}[section]
\newtheorem{axiom}[thm]{Axiom}
\declaretheorem[name=Theorem,numberwithin=section]{thm}
\declaretheorem[name=Lemma,numberwithin=section]{resultRestate}
\theoremstyle{definition}
\newtheorem{definition}[thm]{Definition}
\newtheorem{assumption}[thm]{Assumption}
\theoremstyle{remark}
\newcommand*{\B}[1]{\mathbf{#1}}
\newcommand{\suma}{\boldsymbol{\alpha}}
\newcommand{\bothParam}{\mathbf{E}}
\newcommand{\gradSymbol}{\Delta\boldsymbol{\mathcal{E}}}
\newcommand{\bothGrad}{\gradSymbol}
\newcommand{\momCoeff}{\beta}
\newcommand{\momVal}{\boldsymbol{\mu}}
\newcommand{\lr}{\gamma}
\newcommand{\spectralRadius}{\rho}
\newcommand{\identity}{\boldsymbol{I}}
\newcommand{\dynamics}{\boldsymbol{R}}
\newcommand{\dimSymbol}{d}
\newcommand{\bothDim}{\dimSymbol}
\newcommand{\eigval}{\lambda}
\definecolor{shadecolor}{named}{lightgray}
\newcommand{\namex}{\textsc{NAMEx}}
\newcommand{\camex}{\textsc{CAMEx}}
\newcommand{\epcamex}{\textsc{EP\,-\,CAMEx}}
\newcommand{\smoe}{\textsc{SMoE}}
\newcommand{\deept}{DeepSeek\,-\,MoE}
\newcommand{\qwen}{Qwen1.5\,-\,MoE}
\newcolumntype{C}[1]{>{\centering\arraybackslash}m{#1}}
\titlespacing\section{0pt}{0pt plus 0pt minus 2pt}{0pt plus 0pt minus 2pt}
\titlespacing\subsection{0pt}{0pt plus 0pt minus 2pt}{0pt plus 0pt minus 2pt}
\titlespacing\subsubsection{0pt}{0pt plus 0pt minus 2pt}{0pt plus 0pt minus 2pt}
\definecolor{gray}{rgb}{0.5,0.5,0.5}
\definecolor{pygreen}{rgb}{0.0, 0.5, 0.0}
\definecolor{pyred}{rgb}{0.7, 0.0, 0.0}
\definecolor{pyblue}{rgb}{0.0, 0.0, 0.7}
\definecolor{pygray}{rgb}{0.5, 0.5, 0.5}
\definecolor{pydarkgray}{rgb}{0.3, 0.3, 0.3}
\titlespacing\section{0pt}{0pt plus 0pt minus 2pt}{0pt plus 0pt minus 2pt}
\titlespacing\subsection{0pt}{0pt plus 0pt minus 2pt}{0pt plus 0pt minus 2pt}
\titlespacing\subsubsection{0pt}{0pt plus 0pt minus 2pt}{0pt plus 0pt minus 2pt}
\newcommand\blfootnote[1]{%
  \begingroup
  \renewcommand\thefootnote{}\footnote{#1}%
  \addtocounter{footnote}{-1}%
  \endgroup
}
\author{Dung V. Nguyen\textsuperscript{1}$^{\ast}$ \quad  Anh T. Nguyen\textsuperscript{2}$^{\ast}$ \quad
  Minh H. Nguyen\textsuperscript{3} \quad
  Luc Q. Nguyen\textsuperscript{2} \quad
  Shiqi Jiang\textsuperscript{1} \\
  \textbf{Ethan Fetaya\textsuperscript{4} \quad  Linh Duy Tran\textsuperscript{5}$^{\dagger}$ \quad
  Gal Chechik\textsuperscript{4}$^{\dagger}$ \quad  Tan M. Nguyen\textsuperscript{1}$^{\dagger}$}
 \\
  \textsuperscript{1}Department of Mathematics, National University of Singapore \\
  \textsuperscript{2}Viettel AI, Viettel Group \\
  \textsuperscript{3}Faculty of Mathematics and Informatics, Hanoi University of Science and Technology \\
  \textsuperscript{4}Bar Ilan University, Israel\\
  \textsuperscript{5}AI Imaging Team, Data Solution Department, FPT Software Japan\\
   \texttt{\{dungnv, shiqijiang\}@u.nus.edu}, \texttt{tanmn@nus.edu.sg} \\
   \texttt{\{ethan.fetaya, gal.chechik\}@biu.ac.il} \\
   \texttt{ minh.nh232331M@sis.hust.edu.vn} \\
   \texttt{\{anhnt21, lucnq1\}@viettel.com.vn}\\
   \texttt{linhtd32@fpt.com}\\ 
}
\begin{document}

\maketitle

\begin{abstract}
Existing expert merging strategies for Sparse Mixture of Experts (SMoE) typically rely on input-dependent or input-independent averaging of expert parameters, but often lack a principled weighting mechanism. In this work, we reinterpret expert merging through the lens of game theory, revealing cooperative and competitive dynamics among experts. Based on this perspective, we introduce Nash Merging of Experts (NAMEx), a novel framework that incorporates Nash Bargaining into the merging process, enabling more balanced and efficient collaboration among experts. Additionally, we incorporate complex momentum into NAMEx to accelerate expert propagation with theoretical guarantees for convergence. Extensive experiments across language modeling, text classification, image classification, and zero-shot robustness under data corruption show that NAMEx consistently outperforms competing methods while integrating seamlessly with popular MoE architectures. Finally, we demonstrate NAMEx’s scalability by applying it to large-scale systems, including Qwen1.5-MoE (14B) and DeepSeek-MoE (16B), where it proves effective in both zero-shot and fine-tuning settings. The code is publicly available at: 
\url{https://github.com/anh147/NAMEx}.
\blfootnote{$^{\ast}$Co-first authors. $^{\dagger}$Co-last authors. Correspondence to: dungnv@u.nus.edu \& tanmn@nus.edu.sg}
% Current expert merging methods for Sparse Mixture of Experts (SMoE) typically average the experts' parameters in either an input-dependent or input-independent manner, often lacking a principled weighting strategy. In this work, we reinterpret expert merging in SMoE through the lens of game theory, uncovering cooperative and competitive dynamics among experts. We propose Nash Merging of Experts (NAMEx), a novel approach that integrates Nash Bargaining into the merging process, enabling a more balanced and efficient collaboration among experts. Additionally, we introduce complex momentum to accelerate convergence during expert propagation with theoretical guarantees for convergence. Our extensive experiments on language modeling, text classification, image classification, and zero-shot robustness evaluation on corrupted image data consistently demonstrate that NAMEx outperforms competing methods across diverse tasks and integrates seamlessly with popular MoE architectures. Finally, we verify NAMEx's scalability by applying it to large-scale systems, including Qwen1.5-MoE (14B parameters) and DeepSeek-MoE (16B parameters), where it proves effective in both zero-shot and fine-tuning scenarios.
\end{abstract}

\section{Introduction}
Scaling up neural networks without proportional increases in computational cost is a key goal in modern deep learning. Sparse Mixture of Experts (SMoE) architectures offer a powerful solution: they selectively activate only a subset of expert modules for each input, thereby maintaining high capacity while preserving computational efficiency. 
Building on the classical Mixture of Experts (MoE) framework \citep{jacobs1991adaptive}, SMoE leverages a dynamic gating mechanism to determine which experts participate in processing a given input. This sparsity allows extremely large models to be trained efficiently and has shown promise across natural language processing \citep{shazeer2017outrageously, liu2024deepseek, qwen2}, and computer vision \citep{riquelme2021scaling, puigcerver2023softmoe, nielsen2025tight} applications.

A core component of SMoE is the routing mechanism, which dynamically determines expert assignments. Significant efforts have focused on improving routing stability, load balancing, and expressiveness. For example, StableMoE \citep{dai2022stable} introduces a two-stage strategy to reduce routing variance; SMEAR \citep{muqeeth2023soft} proposes soft parameter merging via weighted averaging to bypass discrete selection; and HyperRouter \citep{do2023hyperrouter} uses hypernetworks to generate router parameters. Meanwhile, SoftMoE \citep{puigcerver2023softmoe} blends sparse and dense routing, and patch-level routing \citep{chowdhury2023patch} improves sample efficiency in visual tasks. Beyond these, Switch Transformer \citep{fedus2022switch} simplifies routing to top-$1$ expert selection with auxiliary load-balancing losses for stable large-scale training, GShard \citep{lepikhin2020gshard} introduces expert capacity constraints and scalable sharding strategies, and BASE Layers \citep{lewis2021base} formulates routing as a balanced assignment problem to improve expert utilization. Hash-based routing \citep{roller2021hash} replaces learned routers with deterministic hashing to reduce overhead, while DSelect-k \citep{hazimeh2021dselect} provides a differentiable relaxation for sparse expert selection to improve gradient flow. Collectively, these approaches span hard, sparse routing with explicit balancing objectives to soft and differentiable formulations, reflecting a broad design space aimed at enhancing stability, scalability, and specialization in SMoE systems.
  \begin{wrapfigure}[23]{r}{0.4\linewidth}
    \centering
    \vspace{-1em}
    \includegraphics[width=\linewidth]{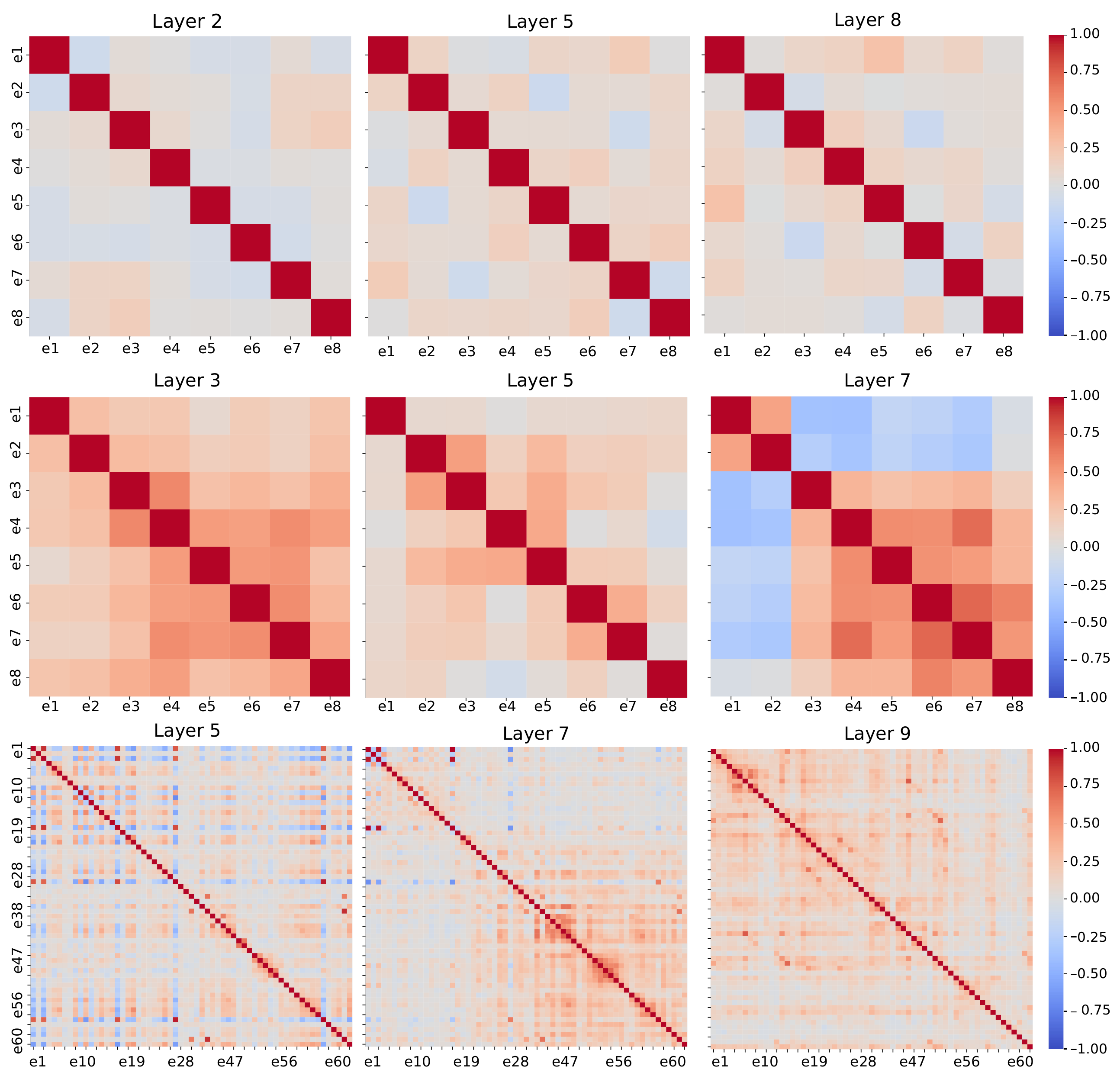}
    \caption{Cosine similarity of expert outputs in Swin-MoE~\citep{liu2021Swin} (top), Switch-Transformer~\citep{fedus2022switch} (middle), and Qwen-MoE~\citep{qwen2} (bottom). Swin-MoE shows stable mid-layer features, Switch-Transformer exhibits dynamic routing at Layer 8, and Qwen-MoE yields robust final representations at Layer 9--highlighting diverse expert interaction patterns.}
    \label{fig:experts-heat}
\end{wrapfigure}
Beyond routing, a complementary yet underexplored direction is \textit{expert merging}. Instead of selecting a subset of experts per input, merging aims to combine all expert parameters into a unified model, either during training or at inference. This approach is especially appealing when deployment or memory constraints demand a single-expert representation. Merging is particularly valuable in autoregressive models \citep{zhong2024lory} and cross-domain transfer settings \citep{chen2022towards}. However, most current merging techniques, such as soft-merging \citep{muqeeth2023soft, zhong2024lory} and top-$k$ aggregation \citep{he-etal-2023-merging, li2024merge}, rely on heuristic weighting schemes that ignore the intricate dynamics between experts.

Recent work has begun to address this limitation.  \citep{iclr2024camex} introduces a curvature-aware merging scheme, namely Curvature-aware merging of experts (CAMEx), that uses natural gradients to account for non-Euclidean geometry in parameter space. A variant, corresponding to the dynamic merging (Dynamic-Merg) mechanism in the CAMEx paper, which we refer to as Expert-Propagation CAMEx (EP-CAMEx), propagates a base expert across layers to promote inter-layer communication. However, despite its elegance, EP-CAMEx underperforms its static variant, likely due to insufficient coordination among expert contributions. 
This motivates a deeper question: \textit{Can we interpret expert merging as a structured interaction among experts, rather than just a linear average?}\\

{\bf Contribution.} In this paper, we frame expert merging as a cooperative-competitive game among experts. Drawing inspiration from multi-task learning, we adopt the \textit{Nash Bargaining Solution} (NBS) \citep{nash1950barganing} to derive merging coefficients from first principles based on each expert’s contribution. Our method, named \emph{Nash Merging of Experts} (NAMEx), treats expert domain vectors as utility functions in a bargaining game. By solving for the optimal agreement point, NAMEx ensures a fair and efficient merging process that reflects expert alignment and divergence.

To address the slow convergence of EP-CAMEx, we further integrate \emph{complex momentum} \citep{lorraine2022complex} 
into the propagation process. This enhancement accelerates convergence while preserving stability, especially when expert interactions include adversarial or conflicting dynamics. 
We theoretically prove the convergence of NAMEx under mild conditions and provide a spectral radius-based bound for the convergence rate of NAMEx-Momentum. Our contribution is three-fold:
 \begin{enumerate}[leftmargin=24pt]
     \item We develop NAMEx, a new expert merging method that integrates the Nash Bargaining optimization framework of \citep{navon2022multi} into EP-CAMEx~\citep{iclr2024camex}, improving expert propagation at each SMoE layer.
     \item We incorporate complex momentum into our NAMEx to enhance the stability and convergence speed of expert propagation across layers and provide theoretical guarantees.
     \item We demonstrate that quaternion momentum presents a promising future direction for further improving expert merging. 
 \end{enumerate}
Comprehensive experiments across diverse tasks--including WikiText-103 language modeling~\citep{merity2016pointer}, GLUE text classification finetuning~\citep{wang2019glue}, and ImageNet-1k image classification and zero-shot robustness under data corruption~\citep{deng2009imagenet}--demonstrate the effectiveness of our approach, achieving superior accuracy compared to baseline methods while preserving advantages in computational efficiency. Moreover, we establish NAMEx's scalability by deploying it on large systems such as Qwen1.5-MoE (14B) and DeepSeek-MoE (16B), where it delivers strong performance in both zero-shot and fine-tuning scenarios.

% Empirical results across diverse tasks, such as WikiText-103 language modeling~\citep{merity2016pointer} and ImageNet image classification~\citep{deng2009imagenet}, demonstrate the effectiveness of our approach, achieving superior accuracy compared to baseline methods while preserving advantages in computational efficiency.
 
 {\bf Organization.} Section~\ref{sec:background} reviews SMoE, CAMEx, and Nash Bargaining; Section~\ref{sec:namex} introduces NAMEx and its momentum extension; Section~\ref{sec:experiment} presents experiments with ablations; Section~\ref{sec:related} discusses related work; and Section~\ref{sec:conclusion} concludes with limitations.
 
 % Additional implementation details, notation, proofs, and extended results are included in the Appendix.

\section{Background}
\label{sec:background}
\subsection{Sparse Mixture of Experts}
The Mixture of Experts (MoE) framework enables modular neural computation by combining multiple specialized sub-networks (\textit{experts}) through a gating function~\citep{jacobs1991adaptive}. The Sparse Mixture of Experts (SMoE) variant enhances scalability by activating only a small subset of experts per input, significantly reducing computation during training and inference~\citep{shazeer2017outrageously, lepikhin2021gshard,fedus2022switch}. 
% This architecture has achieved strong results in both NLP and vision \citep{riquelme2021scaling}.

Let $\mathbf{x} \in \mathbb{R}^d$ be an input and ${f_i(\mathbf{x})}_{i=1}^N$ denote expert outputs. A gating network computes weights $s_i(\mathbf{x})$ such that:
\begin{equation}
s_i(\mathbf{x}; \mathbf{\theta}_g) \geq 0, \quad \sum_{i=1}^N s_i(\mathbf{x}; \mathbf{\theta}_g) = 1, \quad F(\mathbf{x}) = \sum_{i=1}^N s_i(\mathbf{x}; \mathbf{\theta}_g) f_i(\mathbf{x}).
\end{equation}
SMoE improves model capacity without linearly scaling compute, making it a central design in recent large-scale architectures.
\begin{figure*}[t!]
    \centering
    \includegraphics[width=0.8\linewidth]{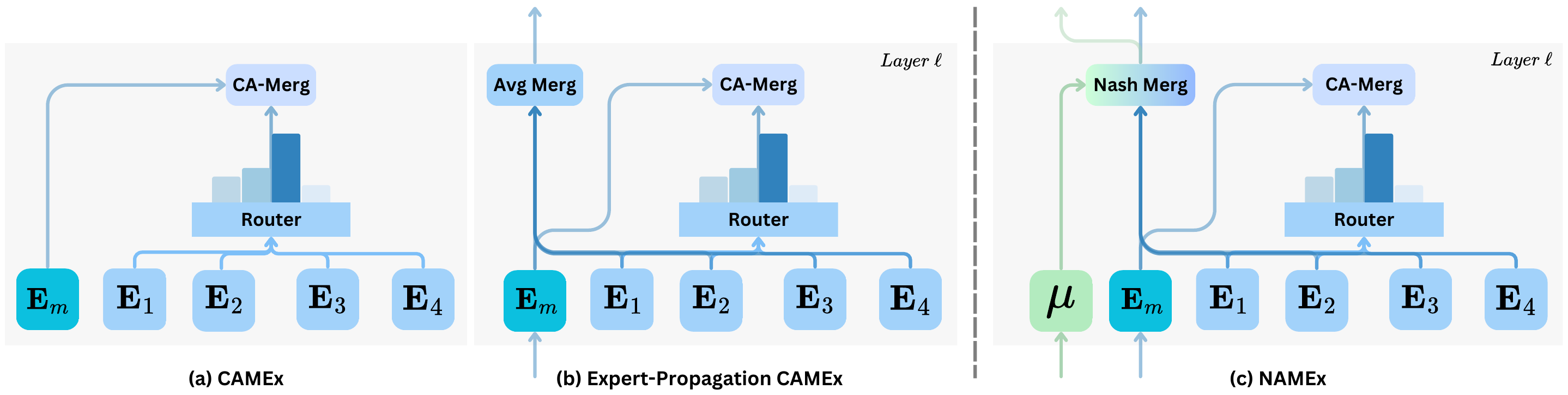}
    \caption{Architecture overview of (a) CAMEx \citep{iclr2024camex}, (b) Expert-Propagation CAMEx \citep{iclr2024camex}, and (c) our proposed merging method, NAMEx.}
    \label{fig:architect}
\vspace{-0.2in}
\end{figure*}

\subsection{Curvature-aware Merging of Experts}
% CAMEx is a state-of-the-art protocol that integrates natural gradients \tmn{into expert merging in order to capture} the non-Euclidean curvature inherent in the parameter manifold~\citep{iclr2024camex}. \tmn{Particularly}, 
CAMEx uses natural gradients to align merged experts more closely with the geometry of the parameter space, enhancing both pre-training and fine-tuning processes~\citep{iclr2024camex}. Hence, CAMEx generalizes popular expert merging methods such as SMEAR \citep{muqeeth2023soft} and Lory \citep{zhong2024lory} and can be formulated as the following natural gradient-like merging scheme:
\begin{equation}
\label{eq:domain-specific}
    \mathbf{\hat{E}}^{(l)}_m = \mathbf{E}^{(l)}_m + \eta \sum_{i=1}^{N}  \B{M}^{(l)}_i \cdot (s_i^{(l)} *\tau_i^{(l)}),
\end{equation}
where $\tau^{(l)}_i = \mathbf{E}^{(l)}_i - \mathbf{E}^{(l)}_m$ is the domain-vector of the $i$-th expert, representing its deviation from the base expert. $\mathbf{E}^{(l)}_i$, $\mathbf{E}^{(l)}_m$, and $\mathbf{\hat{E}}^{(l)}_m$ denote the weights of the $i$-th expert, the base expert, and the resulting merged expert that processes the input, respectively. Here, the base expert $\mathbf{E}^{(l)}_m$ is shared between tokens in layer $l$, just like in DeepSeek-V2~\citep{liu2024deepseek} and V3~\citep{liu2024deepseekv3}.  $\eta > 0$ denotes the stepsize for updating the base expert, and $\B{M}^{(l)}_i$ is the curvature matrix for the $i$-th expert. 
 EP-CAMEx is an extension of CAMEx, in which the base expert $\bothParam^{(0)}_m$ is initialized at the first layer, and $\bothParam^{(l)}_m$ and $\hat\bothParam^{(l)}_m$ are updated at subsequent layers as follows:
\begin{equation}
 \begin{cases}
 \label{eq:camex-dynamic}
    \mathbf{E}_m^{(l+1)} &= \mathbf{E}_m^{(l)} + \dfrac{\gamma}{N} \displaystyle\sum_{i=1}^{N} \mathbf{M}^{(l)}_i \cdot \tau_i^{(l)}, \\
    \mathbf{\hat{E}}_m^{(l+1)} &= \mathbf{E}_m^{(l+1)} + \eta \displaystyle\sum_{i=1}^{N} \mathbf{M}^{(l+1)}_i \cdot (s^{(l+1)}_i * \tau_i^{(l+1)}).
\end{cases}   
\end{equation}
Here, $\gamma > 0$ denotes the step size for the propagation of the base expert $\bothParam^{(l)}_m$. In the first equation of system \eqref{eq:camex-dynamic} above, if we view each domain-vector $\tau_i^{(l)}$ as a ``gradient direction'' attempting to pull the base expert toward the corresponding expert's domain, then the formulation can be interpreted as a dynamical system that updates $\mathbf{E}^{(l)}_m$ using Multiple-Gradient Descent Algorithm (MGDA) \citep{desideri2012mdga} to minimize the distance between $\mathbf{E}_m$ and $i$-th domain. Consequently, this can be framed as a multi-objective optimization or multi-task learning problem. We illustrate both CAMEx and EP-CAMEx in Figure~\ref{fig:architect}(a) and (b).

\subsection{Nash Bargaining in Multi-Task Learning}

The Nash Bargaining Solution (NBS) \citep{nash1950barganing} is a foundational concept in cooperative game theory, describing how multiple agents can reach a fair and Pareto-optimal agreement. A bargaining problem is typically defined by a agreement set of outcomes $\mathcal{S} \subseteq \mathbb{R}^N$ and a disagreement point $\mathbf{d} \in \mathbb{R}^N$, which specifies the utility each player receives if no agreement is reached. The NBS selects an outcome $\mathbf{u}^* \in \mathcal{S}$ that maximizes the product of individual gains over the disagreement point:
\begin{equation}
    \mathbf{u}^* = \arg\max_{\mathbf{u} \in \mathcal{S}} \prod_{i=1}^N (u_i - d_i).
\end{equation}
The disagreement point in the Nash Bargaining Problem is the fallback outcome each player receives if no agreement is reached. It serves as a baseline against which any cooperative agreement is measured, shaping the set of feasible solutions. Players often consider their disagreement point strategically, as improvements to it can strengthen their bargaining position and influence the final outcome. 
% In our paper, we set the disagreement point at 0, corresponding to maintaining the current parameters $\theta$. This solution reflects a trade-off between fairness and efficiency, and satisfies key axioms such as symmetry and Pareto optimality.

Recent work by~\citep{navon2022multi} demonstrates that multi-task learning (MTL) can be naturally framed as a bargaining game. In this setting, each task corresponds to a player, and the goal is to determine a shared parameter update direction $\Delta \boldsymbol{\theta}$ that benefits all tasks. The agreement set is typically constrained to a unit ball $B_\epsilon = \{ \Delta \boldsymbol{\theta} \mid \|\Delta \boldsymbol{\theta}\| \leq \epsilon \}$, while the disagreement point is set to zero, indicating no parameter update. Each task $i$ provides a utility function
\begin{equation}
\label{eq:utility-mtl}
u_i(\Delta \boldsymbol{\theta}) = \boldsymbol{\tau}_i^\top \Delta \boldsymbol{\theta},
\end{equation}
where $\boldsymbol{\tau}_i$ is the gradient of the task-specific loss with respect to the model parameters. Under the assumption that these gradients are linearly independent, the NBS yields the optimal update direction:
\begin{equation}
\Delta \boldsymbol{\theta} = \sum_{i=1}^N \alpha_i \boldsymbol{\tau}_i, \quad \text{where } \B{G}^\top \B{G} \boldsymbol{\alpha} = 1 / \boldsymbol{\alpha},
\end{equation}
with $\B{G} = [\boldsymbol{\tau}_1, \dots, \boldsymbol{\tau}_N]$ and $1 / \boldsymbol{\alpha}$ denoting element-wise reciprocals.

This formulation provides a principled way to resolve conflicting gradients, balancing cooperative and adversarial dynamics among tasks. In this paper, we leverage this framework to reinterpret expert merging in SMoE and particularly, CAMEx, as a bargaining game among experts, where each domain vector $\boldsymbol{\tau}^{(l)}_i$, $i=1,\dots,N$, plays the role of a task gradient.

\section{Nash Merging of Experts}
\label{sec:namex}
Building on the foundations of CAMEx and Nash Bargaining, we now introduce NAMEx--a novel method for merging experts in SMoE via Nash Bargaining. Rather than treating expert merging as a simple averaging task, NAMEx models it as a multi-agent bargaining game, where each expert proposes a directional update, i.e., its domain vector, and the merged expert is obtained through a principled aggregation reflecting both cooperation and competition. To address slow convergence in existing propagation methods like EP-CAMEx, we further introduce \emph{complex momentum} into NAMEx, enabling faster and more stable propagation through SMoE layers. An overview of our approach is shown in \cref{fig:architect}(c).

\subsection{Merging Experts as a Bargaining Game}
Setting $\bothGrad^{(l)}$ as an update direction for $\mathbf{E}^{(l)}_m$ of the $l$-th layer in the first equation of system \eqref{eq:camex-dynamic}, we adjust the expert-propagating updating step in EP-CAMEx as follows:
\begin{equation}
 \begin{cases}
 \label{eq:dynamic}
    \mathbf{E}_m^{(l+1)} &= \mathbf{E}_m^{(l)} + \gamma \bothGrad^{(l)}, \\
    \mathbf{\hat{E}}_m^{(l+1)} &= \mathbf{E}_m^{(l+1)} + \eta \displaystyle\sum_{i=1}^{N} \mathbf{M}_i \cdot (s^{(l+1)}_i * \boldsymbol{\tau}_i^{(l+1)}).
\end{cases}   
\end{equation}
Like CAMEx, we view the domain-vectors $\boldsymbol{\tau}_i^{(l)} = \mathbf{E}^{(l)}_i - \mathbf{E}^{(l)}_m$ as analogous to a gradient step that pulls $\mathbf{E}_m$ toward $\mathbf{E}_i$'s domain. However, different from the formulation of EP-CAMEx in system~\eqref{eq:camex-dynamic}, we remove the curvature matrix in the first equation to align with the Bargaining Game given by Algorithm 1 in \citep{navon2022multi}. Our goal now is to find an optimal update vector $\bothGrad^{(l)}$ which benefits all experts, i.e.,  finding $\boldsymbol{\alpha}^{(l)} = [\alpha^{(l)}_1, \alpha^{(l)}_2, \dots, \alpha^{(l)}_N]$ to aggregate the domain-vectors $\boldsymbol{\tau}_i^{(l)}$ into $\bothGrad^{(l)}$ as in Eqn.~\ref{eq:dynamic}. We hypothesize that experts in SMoE engage in \emph{mixed games} comprising both cooperative and competitive dynamics. 

{\bf Layer-Wise Expert Interaction Dynamics.} Following the analysis protocol described in \citep{lo2025closer}, we observe that expert behavior varies by layer and architecture (see \cref{fig:experts-heat}), revealing both cooperative and adversarial patterns. For instance, in Swin-MoE \citep{liu2021Swin}, middle layers show high inter-expert similarity, while Qwen-MoE \citep{qwen2} concentrates alignment in deeper layers. This motivates a dynamic, layer-wise approach to merging--exactly what NAMEx provides. Please refer to Figure \ref{fig:swin-heatmap} and Figure \ref{fig:switch-heatmap} in Appendix~\ref{app:extra_emp_analysis} for more analysis on the dynamic of expert interaction. For comparison regarding expert interaction patterns under the impact of Load Balancing loss, please refer to Figure \ref{fig:lb-and-nlb} in Appendix \ref{app:extra_emp_analysis}.

 % \emph{This observation suggests that the behavior of experts cannot be captured optimally by using simple averaging as of the previous work}. Instead, a more effective strategy for determining the merging coefficients should account for the experts' dynamics at each specific layer.
Adapting the bargaining game's formulation in \citep{navon2022multi}, NAMEx solves the following problem: 

% \begin{problem}[Nash Bargaining Problem for Expert Merging]
% \label{prob:nash-merging-problem}
% Given an experts-merging problem with the set of expert parameters and a base expert $\{\mathbf{E}_1, \mathbf{E}_2, \dots, \mathbf{E}_N\} \bigcup \{\mathbf{E}_m\} $, find an update vector $\bothGrad$ within a ball of radius $\epsilon$ centered at zero, denoted as $B_\epsilon$.
% \end{problem}

\begin{tcolorbox}[
  colback=gray!5,      % light gray background
  colframe=black!70,   % dark gray border
  boxrule=0.8pt,       % thickness of border
  arc=2mm,             % rounded corners
  enhanced,
  width=\textwidth,
  boxsep=4pt,          % padding inside box
  left=6pt, right=6pt, top=4pt, bottom=4pt,
]
[{\bf Bargaining of Expert Merging (BEM) Problem}] \emph{Given an experts-merging problem with the set of expert parameters $\{\mathbf{E}_1, \mathbf{E}_2, \dots, \mathbf{E}_N\}$  and the base expert's parameter  $\mathbf{E}_m $, find an update vector $\bothGrad$ within a ball $B_\epsilon$ of radius $\epsilon$ centered at zero, i.e., $B_\epsilon = \{ \bothGrad \mid \|\bothGrad\| \leq \epsilon \}$.} 
\end{tcolorbox}

Inspired by \citep{navon2022multi}, in this bargaining problem, we set the disagreement point to $0$, corresponding to not updating $\mathbf{E}_m$. Similar to Eqn.~\ref{eq:utility-mtl}, the utility function for each expert is defined as $u_i(\bothGrad) = \boldsymbol{\tau}_i^\top \bothGrad$, where $\boldsymbol{\tau}_i$ is the domain-vector for expert $i$, representing its deviation from the base expert and capturing its unique contribution to the merging process. Here, $\bothGrad$ is equivalent to $\Delta \boldsymbol{\theta}$ in Eqn.~\ref{eq:utility-mtl}. We have the following mild axiom on the Nash bargaining solution.
\begin{axiom}[Pareto optimality of Nash bargaining solution \citep{nash1950barganing}]
\label{ass:solvable}
The selected agreement must be Pareto efficient, i.e. no other feasible outcome should exist that improves one player's utility without reducing the utility of at least one other player.
\end{axiom}
Under Axiom~\ref{ass:solvable}, the solution to the BEM Problem above is given by the following lemma.
\begin{restatable}[Nash Solution of Expert Merging]{lemma}{Resutlnash}
\label{lem:alphas}
Let $\B{G}$ denote the $d \times N$ matrix whose columns are the domain-vectors $\tau_i$. The solution to 
\begin{equation}
\arg\max_{\bothGrad \in B_\epsilon} \sum_{i=1}^N \log(\bothGrad^\top \tau_i)
\end{equation}
is (up to scaling) $\bothGrad^* = \sum_{i=1}^N \alpha_i \tau_i$, where $\alpha \in \mathbb{R}^N_+$ satisfies $\B{G}^\top \B{G} \alpha = 1 / \alpha$, with $1 / \alpha$ being the element-wise reciprocal.
\end{restatable}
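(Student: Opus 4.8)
The plan is to treat the \textbf{Bargaining of Expert Merging} problem as a smooth concave program and read off $\bothGrad^*$ and $\alpha$ from its Karush–Kuhn–Tucker conditions. With the disagreement point fixed at $0$ and utilities $u_i(\bothGrad)=\tau_i^\top\bothGrad$, the Nash product $\prod_{i=1}^N(u_i-d_i)$ becomes $\prod_i \tau_i^\top\bothGrad$, whose logarithm is the stated objective $\sum_{i=1}^N\log(\tau_i^\top\bothGrad)$, so maximizing it over $B_\epsilon$ is exactly the Nash bargaining solution. I would restrict to the open convex cone $\mathcal{C}=\{\bothGrad:\tau_i^\top\bothGrad>0\ \text{for all }i\}$ on which the objective is finite: each $\log(\tau_i^\top\bothGrad)$ is concave, being a concave increasing scalar function composed with a linear map, so the objective is concave on $\mathcal C$ and strictly concave along $\mathrm{span}\{\tau_1,\dots,\tau_N\}$ once the domain-vectors are linearly independent (equivalently $\B{G}$ has full column rank, the same non-degeneracy assumed in the MTL setup of Eqn.~\eqref{eq:utility-mtl}). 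Linear independence also supplies a Slater point, e.g.\ $\bothGrad=t\,\B{G}(\B{G}^\top\B{G})^{-1}\mathbf 1$ with $t>0$ small, for which $\B{G}^\top\bothGrad=t\mathbf 1>0$ and $\|\bothGrad\|<\epsilon$; with $B_\epsilon$ compact this yields existence and uniqueness of the maximizer.

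Next I would show the ball constraint is active: for any feasible $\bothGrad\in\mathcal C$ the rescaling $\bothGrad\mapsto c\bothGrad$ with $c>1$ stays in $\mathcal C$ and raises the objective by $N\log c>0$, so $\|\bothGrad^*\|=\epsilon$ and, by complementary slackness, the KKT multiplier $\mu$ for $\|\bothGrad\|^2\le\epsilon^2$ is strictly positive. Stationarity of $\mathcal L(\bothGrad,\mu)=\sum_i\log(\tau_i^\top\bothGrad)-\mu(\|\bothGrad\|^2-\epsilon^2)$ reads $\sum_i \tau_i/(\tau_i^\top\bothGrad^*)=2\mu\,\bothGrad^*$, hence
\begin{equation}
\bothGrad^*=\sum_{i=1}^N\alpha_i\tau_i=\B{G}\alpha,\qquad \alpha_i:=\frac{1}{2\mu\,(\tau_i^\top\bothGrad^*)}>0,
\end{equation}
so $\bothGrad^*$ is a strictly positive combination of the $\tau_i$, i.e.\ $\alpha\in\mathbb R_+^N$. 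Substituting $\bothGrad^*=\B{G}\alpha$ back into the definition of $\alpha_i$ gives $(\B{G}^\top\B{G}\alpha)_i=\tau_i^\top\bothGrad^*=1/(2\mu\,\alpha_i)$, that is $\B{G}^\top\B{G}\alpha=\tfrac1{2\mu}(1/\alpha)$ with element-wise reciprocal; replacing $\alpha$ by $\tilde\alpha=\sqrt{2\mu}\,\alpha$ absorbs the constant to give $\B{G}^\top\B{G}\tilde\alpha=1/\tilde\alpha$, while $\bothGrad^*=\B{G}\alpha$ is a positive multiple of $\B{G}\tilde\alpha=\sum_i\tilde\alpha_i\tau_i$ — precisely the ``up to scaling'' in the statement. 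Pareto optimality (Axiom~\ref{ass:solvable}) is consistent throughout, since the Nash-product maximizer is automatically Pareto efficient, which is why the radius bound is saturated; existence of a solution to the fixed-point system is exactly the existence of the maximizer established above.

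I expect the only genuine subtlety to be the feasibility/Slater step: one must verify that $\mathcal C$ is nonempty and meets the interior of $B_\epsilon$, so that the optimum lies in the interior of $\mathcal C$ rather than on its boundary $\{\tau_i^\top\bothGrad=0\text{ for some }i\}$ where the objective diverges to $-\infty$ and the Lagrangian is ill-defined — and this is exactly where the linear-independence hypothesis on the domain-vectors is indispensable. Once interior stationarity is in hand, the strict positivity of $\mu$ and of the $\alpha_i$, and the rescaling that matches the ``up to scaling'' normalization, are routine.
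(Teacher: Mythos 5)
Your proposal is correct and follows essentially the same route as the paper's proof sketch: show the maximizer lies on the boundary of $B_\epsilon$, impose the first-order (radial-gradient/KKT) condition $\sum_i \tau_i/(\tau_i^\top\bothGrad^*) \propto \bothGrad^*$, expand $\bothGrad^*$ in the span of the $\tau_i$, and absorb the multiplier into a rescaling of $\alpha$ — you simply supply the concavity, Slater, and existence details the paper omits. One nitpick: complementary slackness alone does not yield $\mu>0$; that follows instead from stationarity plus linear independence of the $\tau_i$ (or from $\bothGrad^{*\top}\nabla f(\bothGrad^*)=N=2\mu\epsilon^2$), but this is immediate to patch.
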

Note that, under Axiom~\ref{ass:solvable}, it can be proven that the Nash solution to the bargaining problem is not dominated by other solutions. A proof sketch for Lemma \ref{lem:alphas} is provided in Appendix \ref{pf:lm3.1}.

% Noting that the agreement set is compact and convex, the utilities are linear, and the set of achievable payoffs is also compact and convex. 

\subsection{NAMEx as the Nash Solution of Expert Merging}
\label{sec:namex-def}
We now formally define NAMEx as the Nash Bargaining Solution to the BEM problem.

\begin{definition}[NAMEx: Nash Merging of Experts]
\label{def:namex}
{\it Let $\{\mathbf{E}_1^{(l)}, \dots, \mathbf{E}_N^{(l)}\}$ be the expert parameters and let $\mathbf{E}_m^{(l)}$ denote the base expert at layer $l$. Define the domain-vectors as $\boldsymbol{\tau}_i^{(l)} = \mathbf{E}_i^{(l)} - \mathbf{E}_m^{(l)}$, and let $\B{G}^{(l)} = [\boldsymbol{\tau}_1^{(l)}, \dots, \boldsymbol{\tau}_N^{(l)}]$ be the matrix formed by stacking these vectors. The \textit{NAMEx update direction} $\bothGrad^{(l)}$ is defined as:
$$
\bothGrad^{(l)} = \sum_{i=1}^N \alpha_i^{(l)} \boldsymbol{\tau}_i^{(l)},
$$
where $\boldsymbol{\alpha}^{(l)} \in \mathbb{R}_+^N$ satisfies the Nash Bargaining equation:$ {\B{G}^{(l)}}^\top \B{G}^{(l)} \boldsymbol{\alpha}^{(l)} = 1 / \boldsymbol{\alpha}^{(l)}$, with $1 / \boldsymbol{\alpha}^{(l)}$ denoting the element-wise reciprocal. The NAMEx update then proceeds by plugging \textit{NAMEx update direction} into Eqn.~\ref{eq:dynamic}:
\begin{equation}
 \begin{cases}
 \label{eq:namex}
    \mathbf{E}_m^{(l+1)} &= \mathbf{E}_m^{(l)} + \gamma \displaystyle\sum_{i=1}^N \alpha_i^{(l)} \boldsymbol{\tau}_i^{(l)}, \\
    \mathbf{\hat{E}}_m^{(l+1)} &= \mathbf{E}_m^{(l+1)} + \eta \displaystyle\sum_{i=1}^{N} \mathbf{M}_i \cdot (s^{(l+1)}_i * \boldsymbol{\tau}_i^{(l+1)}),
\end{cases}   
\end{equation}
where $\gamma, \eta \in \mathbb{R}_+$ are step-size coefficients, $\mathbf{M}_i$ is the curvature matrix for expert $i$, and $s_i^{(l+1)}$ are the routing weights at layer $l+1$.}
\end{definition}

We summarize the implementation of NAMEx in Algorithm \ref{alg:namex}.

{\bf Dissecting NAMEx.} We now discuss the behavior of NAMEx by studying the Nash Solution of the BEM Problem. First, if all $\tau_j$ are orthogonal, we obtain $\alpha_j = \dfrac{1}{\|\tau_j\|}$ and $\bothGrad = \sum_{j=1}^N \alpha_j \tau_j $, which is a scale-invariant solution. When $\tau_j$ are not orthogonal, we obtain
\begin{equation}
    \alpha_j \| \tau_j \|^2 + \sum_{i \neq j} \alpha_i \tau_i^\top \tau_j = \frac{1}{\alpha_j}. 
    \label{eq:alpha_sol}
\end{equation}
Lemma \ref{lem:alphas} allows us to calculate the optimal update direction $\bothGrad$ for an expert-propagation step at $l$-th layer as $\bothGrad^{(l)} = \sum_{i=1}^N \alpha_i \tau_i^{(l)}.$

Furthermore, assuming that EP-CAMEx obeys the update law in Eqn. 7 in \citep{desideri2012mdga}, the norm $\|\tau_j\|$ is (nearly) identical between domain vectors, we can view the expert update step in \citep{iclr2024camex} as a trivial solution (with a scaling factor) of Lemma \ref{lem:alphas}, ignoring the interaction between experts. While they also apply curvature matrices to the expert propagating step, the learned curvature matrices provide no additional information about other experts. Thus, the conclusion still holds.

\begin{center}
{\small
\begin{minipage}[t!]{.45\textwidth}
        \begin{algorithm}[H]
            \caption{Expert Merging via Nash Bargaining} \label{alg:expert_merging}
            \label{alg:namex}
            \begin{algorithmic}[1]
                \STATE \textbf{Initialize:} Model $M$ with $L$ SMoE layers, number of experts $N$, $\gamma, \eta \in \mathbb{R}^+$
                \STATE \hspace{0.5em} $H^{(t)} \in \mathbb{R}^{B \times S \times N}$: router logits at layer $t$
                \STATE \hspace{0.5em} $T^{(t)} \in \mathbb{R}^{B \times S \times D}$: token sequence at layer $t$
                \FOR{$t = 1$ to $L$}
                    \FOR{$i = 1$ to $N$}
                        \STATE $\tau^{(t)}_i \gets \B{E}^{(t)}_i - \B{E}^{(t)}_m$
                    \ENDFOR
                    \STATE $\B{G}^{(t)} \gets [\tau^{(t)}_1, \tau^{(t)}_2, \dots, \tau^{(t)}_N]$
                    \STATE Solve for $\alpha$: $\left(\B{G}^{(t)}\right)^\top \B{G}^{(t)} \boldsymbol{\alpha} = 1 / \boldsymbol{\alpha}$
                    \STATE $\B{E}^{(t+1)}_m \gets \B{E}^{(t)}_m + \gamma \sum_i \tau^{(t)}_i \alpha_i$
                    \STATE $\B{E}^{(t+1)}_m \gets \B{E}^{(t+1)}_m + \eta \sum_i H^{(t)}_i \cdot \tau^{(t)}_i$
                \ENDFOR
            \end{algorithmic}
        \end{algorithm}
\end{minipage}\hfil
\begin{minipage}[t!]{.45\textwidth}
        \begin{algorithm}[H]
            \caption{NAMEx-Momentum} \label{alg:simultaneous_complex}
\begin{algorithmic}[1]
    \STATE \textbf{Input:} $\gamma \in \mathbb{R}^+$, $\beta \in \mathbb{C}$, $\mu^{(0)} \in \mathbb{C}^{d}$, $\B{E}^{(0)} \in \mathbb{R}^{d}$
    \FOR{$j = 1$ to $L-1$}
        \FOR{$i = 1$ to $N$}
            \STATE $\boldsymbol{\tau}_i^{(j)} \gets \B{E}_i^{(j)} - \B{E}_m^{(j)}$
        \ENDFOR
        \STATE $\B{G}^{(j)} \gets [\boldsymbol{\tau}_1^{(j)}, \dots, \boldsymbol{\tau}_N^{(j)}]$
        \STATE Solve $\boldsymbol{\alpha}$ from: $(\B{G}^{(j)})^\top \B{G}^{(j)} \boldsymbol{\alpha} = 1 / \boldsymbol{\alpha}$
        \STATE $\bothGrad^{(j)} \gets \sum_i \boldsymbol{\tau}_i^{(j)} \alpha_i$ \hfill\COMMENT{\textit{Same update as NAMEx}}
        \STATE $\mu^{(j+1)} \gets \beta \mu^{(j)} + \bothGrad^{(j)}$
        \STATE $\B{E}^{(j+1)} \gets \B{E}^{(j)} + \Re(\gamma \mu^{(j+1)})$
        \STATE \textit{// Optional: Add residual alignment or router term if needed}
    \ENDFOR
\end{algorithmic}
        \end{algorithm}
\end{minipage}
}
\end{center}
In Eqn.~\ref{eq:alpha_sol}, we can consider $\sum_{i \neq j} \alpha_i \tau_i^\top \tau_j = (\sum_{i \neq j} \alpha_i \tau_i^\top) \tau_j$ as the interaction between the $j$-th expert and the other experts. If the sum is positive, the experts cooperate, and the other domain-vectors aid the $j$-th expert. $\alpha_j$ decreases in this case. If the sum is negative, the other experts hamper the $j$-th expert, i.e., an adversarial behavior between experts, and therefore, $\alpha_j$ increases to ensure that Eqn.~\ref{eq:alpha_sol} holds.
% \subsubsection{Implementation of NAMEx}
% \label{sec:impl_namex}
% In Algorithm \ref{alg:namex}, $L$ represents the number of layers in the MoE model. We argue that experts at different layers capture distinct, token-specific insights, and their collaboration can enhance overall token processing. In our view, each layer's expertise offers a unique perspective on the input tokens, and by integrating these varied insights, models can achieve a more robust understanding. Although research on the interplay of knowledge across layers in deep models is still limited, our hypothesis is supported by methodologies seen in works such as \citep{teo2025molex, li2024warehouse, choromanski2020ode}. We believe that a deeper exploration of these inter-layer interactions is a promising direction for advancing model performance.

\subsection{Integrating Momentum Into Expert Merging}
% We hypothesize that another factor contributing to EP-CAMEx's inferiority compared to CAMEx is the fixed updating steps, which is constrained by the number of layers in the model. This limitation makes it difficult for the base expert to converge in the later stages, resulting in suboptimal performance. To address this issue, we adopt the well-known method of adding momentum to accelerate the convergence rate of the optimization process. Moreover, \citep{lorraine2022complex} proposes complex momentum, which is robust during optimization, converging faster and over a larger range of mixtures
% of cooperative and adversarial games than existing first-order methods. Building on this idea, we incorporate complex momentum into expert merging, further improving the convergence speed of expert propagation across layers and providing theoretical evidence for the convergence rate of this approach. We summarize our NAMEx-Momentum in \cref{alg:simultaneous_complex} and provide a detailed formulation of NAMEx-Momentum below.
We hypothesize that one reason for EP-CAMEx's inferior performance compared to CAMEx is its reliance on a fixed number of update steps, constrained by the model’s layer count. This limitation hinders the convergence of the base expert in later stages, leading to suboptimal performance. To mitigate this, we introduce momentum to accelerate convergence during optimization. In particular, we adopt complex momentum~\citep{lorraine2022complex}, which has been shown to be more robust and effective than standard first-order methods across a wide range of cooperative and adversarial games. By integrating complex momentum into expert merging, we enhance the propagation of expert updates across layers and provide theoretical support for its improved convergence rate. 

We present a formal definition NAMEx-Momentum below and  summarize an algorithm to implement it in \cref{alg:simultaneous_complex}.

\begin{definition}[NAMEx-Momentum: Nash Merging with Complex Momentum]
\label{def:namex-momentum}
{\it Let $\{\mathbf{E}_1, \dots, \mathbf{E}_N\}$ be the expert parameters and $\mathbf{E}_m$ the base expert at a given layer. Define the domain-vectors $\boldsymbol{\tau}_i = \mathbf{E}_i - \mathbf{E}_m$ and the matrix $\B{G} = [\boldsymbol{\tau}_1, \dots, \boldsymbol{\tau}_N]$. At each iteration $j$, the update direction $\bothGrad^{(j)} = \sum_{i=1}^N \alpha_i \boldsymbol{\tau}_i$ is computed where $\boldsymbol{\alpha}$ solves the Nash system:
\[
\B{G}^\top \B{G} \boldsymbol{\alpha} = 1 / \boldsymbol{\alpha}.
\]
\textit{NAMEx-Momentum} uses a complex momentum buffer $\mu^{(j)} \in \mathbb{C}^d$ to accumulate directional updates:
\begin{equation}
\begin{cases}
    \mu^{(j+1)} &= \beta \mu^{(j)} + \bothGrad^{(j)}, \\
    \mathbf{E}_m^{(j+1)} &= \mathbf{E}_m^{(j)} + \Re(\gamma \mu^{(j+1)})\\
    \mathbf{\hat{E}}_m^{(l+1)} &= \mathbf{E}_m^{(l+1)} + \eta \displaystyle\sum_{i=1}^{N} \mathbf{M}_i \cdot (s^{(l+1)}_i * \boldsymbol{\tau}_i^{(l+1)}),
\end{cases}
\end{equation}
where $\beta \in \mathbb{C}$ is the momentum coefficient, $\gamma \in \mathbb{R}^+$ is the step size, and $\Re(\cdot)$ denotes the real part.}
\end{definition}

We provide a convergence guarantee for NAMEx-Momentum update in Proposition~\ref{prop:theorem_existence} below and Theorem~\ref{theo:converge_complex} in Appendix~\ref{app:proofs}. Their proofs are in Appendix~\ref{pf:prop}
\begin{restatable}[Convergence rate of NAMEx-Momentum]{proposition}{resultExist}
    \label{prop:theorem_existence}
         There exist $\lr \in \mathbb{R}^+, \momCoeff \in \mathbb{C}$ so Algorithm~\ref{alg:simultaneous_complex} converges for NAMEx-Momentum.
\end{restatable}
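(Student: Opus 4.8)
The plan is to recast the NAMEx--Momentum recursion of Algorithm~\ref{alg:simultaneous_complex} as a fixed--point iteration and then exhibit $\lr$ and $\momCoeff$ for which its linearized transition operator is a strict contraction, following the complex--momentum spectral analysis that underlies Theorem~\ref{theo:converge_complex}. With the experts $\{\mathbf E_1,\dots,\mathbf E_N\}$ held fixed, write the aggregated Nash direction as the vector field
\[
\boldsymbol g(\bothParam_m)\;\coloneqq\;\textstyle\sum_{i=1}^N \alpha_i(\bothParam_m)\,\boldsymbol\tau_i(\bothParam_m),\qquad \boldsymbol\tau_i(\bothParam_m)=\mathbf E_i-\bothParam_m,
\]
where $\boldsymbol\alpha(\bothParam_m)\in\Real_+^N$ solves ${\B G}^\top\B G\,\boldsymbol\alpha = 1/\boldsymbol\alpha$ as in Lemma~\ref{lem:alphas}; this $\boldsymbol\alpha$ is well defined, unique up to scaling, and (by the implicit function theorem) smooth in $\bothParam_m$ as long as the domain vectors are linearly independent, i.e. $\B G^\top\B G\succ 0$. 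The inner loop of Algorithm~\ref{alg:simultaneous_complex} then reads $\momVal^{(j+1)}=\momCoeff\,\momVal^{(j)}+\boldsymbol g(\bothParam_m^{(j)})$ and $\bothParam_m^{(j+1)}=\bothParam_m^{(j)}+\Re(\lr\,\momVal^{(j+1)})$, with $\boldsymbol g(\bothParam_m^{(j)})=\bothGrad^{(j)}$, i.e. a map $\fixedPointOp$ on $(\momVal,\bothParam_m)\in\complex^{\bothDim}\times\Real^{\bothDim}$. Its fixed points are the pairs $(\mathbf 0,\bothParam_m^{\star})$ with $\boldsymbol g(\bothParam_m^{\star})=\mathbf 0$, which exist by a Brouwer argument on the (compact, convex) hull of the $\mathbf E_i$: the continuous self--map $\bothParam_m\mapsto\big(\sum_i\alpha_i\mathbf E_i\big)/\big(\sum_i\alpha_i\big)$ has a fixed point, and there $\boldsymbol g$ vanishes.

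Next I would linearize $\fixedPointOp$ about $(\mathbf 0,\bothParam_m^{\star})$. Setting $\boldsymbol J\coloneqq D\boldsymbol g(\bothParam_m^{\star})$ and writing $\momVal=\momVal_R+i\momVal_I$, the error $(\momVal_R,\momVal_I,\bothParam_m-\bothParam_m^{\star})$ obeys a real linear operator $\dynamics(\lr,\momCoeff)$ that commutes with the block--diagonalization along the eigenspaces of $\boldsymbol J$, splitting into fixed low--order (at most $3\times3$) blocks parametrized by $(\lr,\momCoeff,\eigval)$ for $\eigval\in\spectrum(\boldsymbol J)$; by Ostrowski's theorem, local convergence of Algorithm~\ref{alg:simultaneous_complex} is equivalent to
\[
\spectralRadius\big(\dynamics(\lr,\momCoeff)\big)<1 .
\]
The structural input is the location of $\spectrum(\boldsymbol J)$: if $\boldsymbol\alpha$ were constant then $\boldsymbol J=-\big(\sum_i\alpha_i\big)\identity\prec 0$, and differentiating $\B G^\top\B G\,\boldsymbol\alpha=1/\boldsymbol\alpha$ implicitly bounds $\|D\boldsymbol\alpha(\bothParam_m^{\star})\|$ in terms of the smallest eigenvalue of $\B G^\top\B G$ (positive, by the linear--independence hypothesis of Lemma~\ref{lem:alphas}), so $\boldsymbol J$ is a controlled perturbation of $-\big(\sum_i\alpha_i\big)\identity$ whose spectrum stays in the open left half--plane $\{\,\Re(\eigval)<0\,\}$.

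Finally I would invoke the complex--momentum feasibility argument of Theorem~\ref{theo:converge_complex} in Appendix~\ref{app:proofs}: for each eigenvalue in the open left half--plane there is a nonempty open set of parameters with $\lr\in\Real^+$ small, $|\momCoeff|<1$ and a suitable $\arg(\momCoeff)$ making the associated block's spectral radius strictly below $1$; picking $(\lr,\momCoeff)$ in the finite intersection of these open sets over $\spectrum(\boldsymbol J)$ gives $\spectralRadius(\dynamics(\lr,\momCoeff))<1$, hence local linear convergence, which establishes the existence claim. A global statement over the propagation trajectory follows by additionally noting that $\boldsymbol g$ is monotone on the region swept by the iterates; for the genuine multi--layer run one treats the slowly varying $\{\mathbf E_i^{(j)}\}$ as a bounded non--autonomous perturbation and uses the uniform spectral gap to get a contraction in a common norm.

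The step I expect to be the main obstacle is the eigenvalue localization of $\boldsymbol J$ once the dependence of the Nash weights on $\bothParam_m$ is retained. Unlike the plain MGDA/averaging case, $D\boldsymbol\alpha$ contributes low--rank corrections coming from the implicit function theorem applied to the nonlinear system $\B G^\top\B G\,\boldsymbol\alpha=1/\boldsymbol\alpha$, and one must show these do not push any eigenvalue onto or past the imaginary axis --- or, if they create near--rotational modes, that a complex $\momCoeff$ still stabilizes them, which is exactly the regime where complex momentum beats real momentum and for which Theorem~\ref{theo:converge_complex} is designed. Making that perturbation bound quantitative in the smallest eigenvalue of $\B G^\top\B G$ is what upgrades the heuristic ``$\boldsymbol J\approx-\big(\sum_i\alpha_i\big)\identity$'' to a rigorous eigenvalue--localization statement, after which the existence of $(\lr,\momCoeff)$ reduces to the low--order root analysis already carried out in the appendix.
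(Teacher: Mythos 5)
Your overall skeleton matches the paper's: both arguments reduce convergence of Algorithm~\ref{alg:simultaneous_complex} to the spectral condition $\spectralRadius(\dynamics)<1$ for the augmented $(\Re(\momVal),\Im(\momVal),\bothParam_m)$ dynamics and then seek $(\lr,\momCoeff)$ achieving it. The real difference is that the paper freezes the Nash weights: it treats $\sum_i\alpha_i\B{E}^{(l)}_i$ as an exogenous bounded input $\B{q}^{(l)}$ and takes the Jacobian of the update to be exactly $-\suma\identity$ with $\suma=\sum_i\alpha_i$, so every block of $\dynamics$ is a scalar multiple of $\identity$ and the characteristic polynomial collapses to a single explicit cubic in $x$ with coefficients in $(r,u,\hat\gamma,\suma)$, where $\momCoeff=r+ui$ and $\lr=\hat\gamma/\suma$. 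You instead retain the dependence $\boldsymbol{\alpha}(\bothParam_m)$, add a Brouwer fixed-point existence step, and linearize about the stationary point; that is more faithful to what the algorithm actually computes, and it correctly isolates the eigenvalue localization of $D\boldsymbol{g}$ as the step the paper silently bypasses.

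The gap is that you never actually establish the existence of stabilizing $(\lr,\momCoeff)$. You delegate this to ``the complex-momentum feasibility argument of Theorem~\ref{theo:converge_complex}'', but that theorem only converts $\spectralRadius(\dynamics)<1$ into a linear convergence rate; it contains no feasibility argument. The entire substance of the paper's proof of Proposition~\ref{prop:theorem_existence} is precisely the step you omit: it writes out the $3\times 3$ block's cubic characteristic polynomial, applies Fujiwara's root bound to obtain an explicit system of inequalities in $(r,u,\hat\gamma)$ sufficient for all roots to lie in the open unit disk, and exhibits a nonempty solution region (e.g.\ $\suma=0.5$, $\lr=2$). Without that computation, or an equivalent root-location argument for your per-eigenvalue blocks, the proposal asserts its conclusion rather than proving it. Moreover, the left-half-plane localization of the spectrum of $D\boldsymbol{g}$ under the low-rank correction coming from $D\boldsymbol{\alpha}$ --- which you yourself flag as the main obstacle --- is also left open, so as written the argument rests on two unproven load-bearing claims.
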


\section{Experimental Results}
\label{sec:experiment}
\begin{wrapfigure}[16]{r}{0.4\textwidth}
\vspace{-0.2in}
    \centering
    \includegraphics[width=0.4\textwidth]{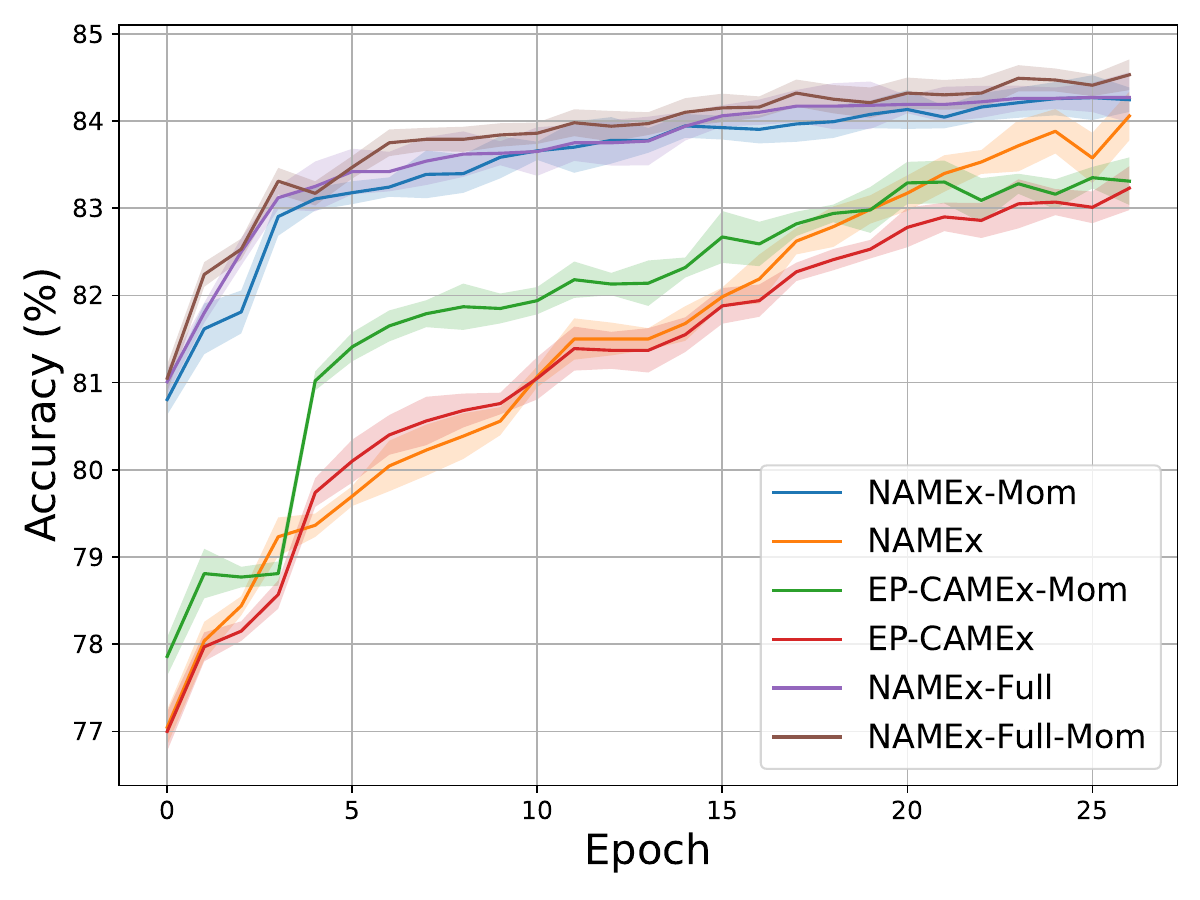}
    \vspace{-2em}
    \caption{Top-1 Accuracy Evaluation of Swin Transformer Variants. Complex momentum enhances convergence speed and performance of both NAMEx and EP-CAMEx.}
    \label{fig:swin-acc}
     \vspace{-0.5in}
\end{wrapfigure}
We evaluate NAMEx and its variants against baseline methods (SMoE, CAMEx, and EP-CAMEx) across diverse tasks: language modeling (WikiText-103~\citep{merity2016pointer}), text classification (GLUE~\citep{wang2019glue}), and image classification (ImageNet-1K~\citep{deng2009imagenet}). To assess robustness, we include evaluations on corrupted datasets: ImageNet-A, ImageNet-O, and ImageNet-R~\citep{hendrycks2021natural, hendrycks2021many}. Results, averaged over five random seeds, show that: (1) NAMEx, leveraging Nash bargaining, consistently improves performance on both vision and language benchmarks; and (2) complex momentum provides additional gains. For MLP-based experts, we follow standard practice and merge parameters layer-wise~\citep{yadav2023ties-merging, yu2024dare, matena2022merging}. Experiments are run on a 8×A100 server. Additional details are available in Appendix~\ref{app:dataset-model-details}.

To match EP-CAMEx's training time, we fix the bargaining budget to 20 iterations per batch and evaluate two NAMEx variants: (1) NAMEx, which computes $\bf \alpha$ once at the first layer and reuses it, showing strong performance over naive averaging; and (2) NAMEx-Full, which distributes the budget evenly across layers, inspired by~\citep{navon2022multi}. Update strategies are further discussed in Section~\ref{sec:ablation}.

In the tables that follow, NAMEx-Full results are highlighted in \textit{\color{blue!30}grey}, with the best and second-best scores shown in \textbf{bold} and \underline{underlined}, respectively.

\subsection{Language Modeling}
\begin{table}[t!]
    \centering
    \caption{Validation and test perplexity on WikiText-103 for small- and medium-scale pretraining.}
    % \vspace{-0.8em}
    \label{tab:merged-wikitext}
    \resizebox{0.65\linewidth}{!}{
    \begin{NiceTabular}{lcccccc}
        \toprule
        \textbf{Model} & \textbf{Params} & \multicolumn{2}{c}{\textbf{Small}} & \multicolumn{2}{c}{\textbf{Medium}} \\
        \cmidrule(lr){3-4} \cmidrule(lr){5-6}
        & & \textbf{Val PPL} & \textbf{Test PPL} & \textbf{Val PPL} & \textbf{Test PPL} \\
        \midrule
        SMoE (Top-1)          & 70M / 216M & $86.64 \scriptscriptstyle{\pm .22}$ & $87.79 \scriptscriptstyle{\pm .31}$ & $38.60 \scriptscriptstyle{\pm .18}$ & $40.51 \scriptscriptstyle{\pm .25}$ \\
        \RowStyle{}SMoE (Top-2)         & 70M / 216M & $84.26 \scriptscriptstyle{\pm .12}$ & $ 84.81 \scriptscriptstyle{\pm .29}$ & $33.76 \scriptscriptstyle{\pm .19}$ & $35.55 \scriptscriptstyle{\pm .22}$ \\
        SMEAR          & 70M / 216M & $85.56 \scriptscriptstyle{\pm .20}$ & $87.24 \scriptscriptstyle{\pm .28}$ & $36.15 \scriptscriptstyle{\pm .17}$ & $37.42 \scriptscriptstyle{\pm .23}$ \\
        CAMEx          & 70M / 216M & $83.53 \scriptscriptstyle{\pm .19}$ & $84.48 \scriptscriptstyle{\pm .26}$ & $35.69 \scriptscriptstyle{\pm .15}$ & $36.53 \scriptscriptstyle{\pm .21}$ \\
        \midrule
        \multicolumn{6}{c}{\textit{w/o momentum}} \\
        \midrule
        EP-CAMEx       & 70M / 216M & $83.89 \scriptscriptstyle{\pm .18}$ & $85.03 \scriptscriptstyle{\pm .24}$ & $35.78 \scriptscriptstyle{\pm .16}$ & $36.55 \scriptscriptstyle{\pm .22}$ \\
        NAMEx          & 70M / 216M & $83.30 \scriptscriptstyle{\pm .21}$ & $84.12 \scriptscriptstyle{\pm .29}$ & $35.14 \scriptscriptstyle{\pm .19}$ & $36.40 \scriptscriptstyle{\pm .27}$ \\
        \rowcolor{blue!10}
        NAMEx-Full     & 70M / 216M & $82.85 \scriptscriptstyle{\pm .17}$ & $83.16 \scriptscriptstyle{\pm .23}$ & $34.92 \scriptscriptstyle{\pm .14}$ & $36.21 \scriptscriptstyle{\pm .20}$ \\
        \midrule
        \multicolumn{6}{c}{\textit{w/ momentum}} \\
        \midrule
        EP-CAMEx-Mom   & 70M / 216M & $82.90 \scriptscriptstyle{\pm .16}$ & $84.05 \scriptscriptstyle{\pm .22}$ & $35.09 \scriptscriptstyle{\pm .13}$ & $36.16 \scriptscriptstyle{\pm .19}$ \\
        NAMEx-Mom      & 70M / 216M & $\underline{82.63 \scriptscriptstyle{\pm .15}}$    & $\underline{83.59 \scriptscriptstyle{\pm .21}}$    & $\underline{34.89 \scriptscriptstyle{\pm .12}}$    & $\underline{35.86 \scriptscriptstyle{\pm .18}}$ \\
        \rowcolor{blue!10}
        NAMEx-Full-Mom & 70M / 216M & $\mathbf{82.44 \scriptscriptstyle{\pm .14}}$ & $\mathbf{82.94 \scriptscriptstyle{\pm .20}}$ & $\mathbf{34.25 \scriptscriptstyle{\pm .11}}$  & $\mathbf{35.37 \scriptscriptstyle{\pm .17}}$ \\
        \bottomrule
    \end{NiceTabular}
}
% \vspace{-0.1in}
\end{table}

\begin{table}[t!]
    \centering
	\caption{Performance of T5-base variants on fine-tuning tasks for GLUE. All SMoE variants have 8 experts per layer. Following \citep{devlin2019bert}, we conduct experiments on the GLUE benchmark.}
    % \vspace{-0.8em}
	\label{tab:glue-results} 
    \resizebox{0.85\linewidth}{!}{
      \setlength{\tabcolsep}{2pt}
      \begin{NiceTabular}{lcccccccc}
            \toprule
            \textbf{Model} & \textbf{Params} & \textbf{SST-2} & \textbf{MRPC} & \textbf{CoLA} & \textbf{STSB} & \textbf{RTE} & \textbf{QNLI} & \textbf{MNLI} \\
            \midrule
            Dense & 220M & $93.34 \scriptscriptstyle{\pm .15}$ & $89.70 \scriptscriptstyle{\pm .11}$ & $58.06 \scriptscriptstyle{\pm .15}$ & $89.06 \scriptscriptstyle{\pm .22}$ & $74.36 \scriptscriptstyle{\pm .27}$ & $92.34 \scriptscriptstyle{\pm .14}$ & $86.36 \scriptscriptstyle{\pm .15}$ \\
            SMoE (Top-1) & 1.0B & $94.26 \scriptscriptstyle{\pm .13}$ & $90.87 \scriptscriptstyle{\pm .12}$ & $56.78 \scriptscriptstyle{\pm .24}$ & $89.44 \scriptscriptstyle{\pm .29}$ & $70.75 \scriptscriptstyle{\pm .32}$ & $92.07 \scriptscriptstyle{\pm .13}$ & $86.38 \scriptscriptstyle{\pm .17}$ \\
            \RowStyle{} SMoE (Top-2) & 1.0B 
            & $94.35 \scriptscriptstyle{\pm .14}$ 
            & $91.04 \scriptscriptstyle{\pm .12}$ 
            & $58.43 \scriptscriptstyle{\pm .26}$ 
            & $89.73 \scriptscriptstyle{\pm .28}$ 
            & $74.98 \scriptscriptstyle{\pm .29}$ 
            & $92.48 \scriptscriptstyle{\pm .16}$ 
            & $86.72 \scriptscriptstyle{\pm .15}$ \\
            CAMEx & 1.0B & $93.80 \scriptscriptstyle{\pm .14}$ & $91.16 \scriptscriptstyle{\pm .13}$ & $58.57 \scriptscriptstyle{\pm .24}$ & $89.47 \scriptscriptstyle{\pm .23}$ & $74.72 \scriptscriptstyle{\pm .35}$ & $92.60 \scriptscriptstyle{\pm .19}$ & $86.44 \scriptscriptstyle{\pm .12}$ \\ \midrule
            \multicolumn{9}{c}{\textit{w/o momentum}} \\ \midrule
            EP-CAMEx & 1.0B & $93.69 \scriptscriptstyle{\pm .11}$ & $91.01 \scriptscriptstyle{\pm .14}$ & $58.29 \scriptscriptstyle{\pm .24}$ & $89.92 \scriptscriptstyle{\pm .31}$ & $75.81 \scriptscriptstyle{\pm .33}$ & $92.17 \scriptscriptstyle{\pm .15}$ & $86.94 \scriptscriptstyle{\pm .14}$ \\
            NAMEx & 1.0B & $94.46 \scriptscriptstyle{\pm .12}$ & $92.01 \scriptscriptstyle{\pm .14}$ & $58.81 \scriptscriptstyle{\pm .36}$ & $90.12 \scriptscriptstyle{\pm .33}$ & $75.09 \scriptscriptstyle{\pm .22}$ & $92.86 \scriptscriptstyle{\pm .17}$ & $86.96 \scriptscriptstyle{\pm .12}$ \\ 
            \rowcolor{blue!10}
            NAMEx-Full & 1.0B & $\underline{94.82 \scriptscriptstyle{\pm .15}}$ & $92.80 \scriptscriptstyle{\pm .13}$ & $\underline{59.63 \scriptscriptstyle{\pm .22}}$ & $\underline{90.27 \scriptscriptstyle{\pm .24}}$ & $\underline{77.83 \scriptscriptstyle{\pm .31}}$ & $\underline{93.23 \scriptscriptstyle{\pm .18}}$ & $\underline{87.23 \scriptscriptstyle{\pm .14}}$ \\ \midrule
            \multicolumn{9}{c}{\textit{w/ momentum}} \\ \midrule
            EP-CAMEx-Mom & 1.0B & $94.61 \scriptscriptstyle{\pm .17}$ & $92.47 \scriptscriptstyle{\pm .13}$ & $59.31 \scriptscriptstyle{\pm .25}$ & $90.07 \scriptscriptstyle{\pm .23}$ & $76.17 \scriptscriptstyle{\pm .36}$ & $92.99 \scriptscriptstyle{\pm .13}$ & $86.80 \scriptscriptstyle{\pm .15}$ \\
            NAMEx-Mom & 1.0B & $94.61 \scriptscriptstyle{\pm .14}$ & $\underline{93.02 \scriptscriptstyle{\pm .16}}$ & $58.90 \scriptscriptstyle{\pm .41}$ & $90.06 \scriptscriptstyle{\pm .36}$ & $77.62 \scriptscriptstyle{\pm .37}$ & $93.11 \scriptscriptstyle{\pm .10}$ & $87.02 \scriptscriptstyle{\pm .14}$ \\
            \rowcolor{blue!10}
            NAMEx-Full-Mom & 1.0B & $\mathbf{95.06 \scriptscriptstyle{\pm .12}}$ & $\mathbf{93.27 \scriptscriptstyle{\pm .14}}$ & $\mathbf{60.13 \scriptscriptstyle{\pm .32}}$ & $\mathbf{90.63 \scriptscriptstyle{\pm .27}}$ & $\mathbf{78.15 \scriptscriptstyle{\pm .30}}$ & $\mathbf{93.31 \scriptscriptstyle{\pm .14}}$  & $\mathbf{87.45 \scriptscriptstyle{\pm .11}}$ \\
            \bottomrule
        \end{NiceTabular}
        }
% \vspace{-0.25in}
\end{table}

We adopt the experimental setup of \citep{pham2024competesmoeeffectivetraining} and \citep{teo2024momentumsmoe} for pre-training and evaluating on the WikiText-103 dataset. \cref{tab:merged-wikitext} presents the results of our methods on small-scale and medium-scale pre-training tasks using WikiText-103. For both small- and medium-scale pre-training, NAMEx-Full-Mom achieves the lowest validation/test perplexities, outperforming SMoE and CAMEx-based methods. NAMEx variants as well as momentum-equipped variants consistently surpass their counterparts across scales, proving the efficacy of Nash bargaining and momentum integration.

\subsection{Text Classification}
We evaluate our method on downstream text classification tasks using the GLUE dataset \citep{wang2019glue}, with all models built on the T5-Base backbone. As shown in \cref{tab:glue-results}, NAMEx-Full-Mom achieves the best results on all tasks. NAMEx consistently outperforms SMoE {(Top-1 and Top-2 routing)}, CAMEx, and EP-CAMEx, highlighting the effectiveness of the Nash bargaining solution. The momentum-based extensions, NAMEx-Mom and EP-CAMEx-Mom, further enhance performance, demonstrating improved robustness and generalization.

\subsection{Image Classification}
\begin{table}[t!]
    \centering
    \caption{Finetuning and zero-shot results on ImageNet-1k and corrupted variants.}
    % \vspace{-0.8em}
    \label{tab:imagenet-results}
    \resizebox{0.8\linewidth}{!}{
    \begin{tabular}{lcccccc}
        \toprule
        \textbf{Model} & \textbf{Params} & \textbf{Acc@1} & \textbf{Acc@5} & \textbf{INet-O} & \textbf{INet-A} & \textbf{INet-R} \\
        \midrule
        SMoE & 50M & $83.15 \scriptscriptstyle{\pm .17}$ & $96.71 \scriptscriptstyle{\pm .12}$ & $43.34 \scriptscriptstyle{\pm .21}$ & $23.72 \scriptscriptstyle{\pm .18}$ & $38.02 \scriptscriptstyle{\pm .20}$ \\
        SMEAR & 50M & $83.15 \scriptscriptstyle{\pm .14}$ & $96.91 \scriptscriptstyle{\pm .09}$ & $43.35 \scriptscriptstyle{\pm .19}$ & $24.14 \scriptscriptstyle{\pm .16}$ & $38.16 \scriptscriptstyle{\pm .22}$ \\

        CAMEx & 50M & $83.29 \scriptscriptstyle{\pm .24}$ & $96.95 \scriptscriptstyle{\pm .13}$ & $50.69 \scriptscriptstyle{\pm .25}$ & $25.45 \scriptscriptstyle{\pm .21}$ & $38.37 \scriptscriptstyle{\pm .20}$ \\
        \midrule
        \multicolumn{7}{c}{\textit{w/o momentum}} \\
        \midrule
        EP-CAMEx & 50M & $83.23 \scriptscriptstyle{\pm .25}$ & $96.93 \scriptscriptstyle{\pm .16}$ & $50.27 \scriptscriptstyle{\pm .28}$ & $24.22 \scriptscriptstyle{\pm .17}$ & $37.88 \scriptscriptstyle{\pm .23}$ \\
        NAMEx & 50M & $84.06 \scriptscriptstyle{\pm .28}$ & $97.19 \scriptscriptstyle{\pm .18}$ & $50.30 \scriptscriptstyle{\pm .27}$ & $25.32 \scriptscriptstyle{\pm .15}$ & $38.56 \scriptscriptstyle{\pm .19}$ \\
        \rowcolor{blue!10}
        NAMEx-Full & 50M & $84.27 \scriptscriptstyle{\pm .24}$ & \underline{$97.94 \scriptscriptstyle{\pm .14}$} & $50.66 \scriptscriptstyle{\pm .22}$ & $25.74 \scriptscriptstyle{\pm .16}$ & $38.70 \scriptscriptstyle{\pm .18}$ \\
        \midrule
        \multicolumn{7}{c}{\textit{w/ momentum}} \\
        \midrule
        EP-CAMEx-Mom & 50M & $83.56 \scriptscriptstyle{\pm .12}$ & $97.03 \scriptscriptstyle{\pm .11}$ & $50.37 \scriptscriptstyle{\pm .20}$ & $33.22 \scriptscriptstyle{\pm .24}$ & $38.22 \scriptscriptstyle{\pm .19}$ \\
        NAMEx-Mom & 50M & \underline{$84.28 \scriptscriptstyle{\pm .26}$} & \underline{$97.94 \scriptscriptstyle{\pm .12}$} & \underline{$51.22 \scriptscriptstyle{\pm .18}$} & \underline{$35.05 \scriptscriptstyle{\pm .19}$} & \underline{$38.82 \scriptscriptstyle{\pm .14}$} \\
        \rowcolor{blue!10}
        NAMEx-Full-Mom & 50M & $\mathbf{84.52 \scriptscriptstyle{\pm .18}}$ & $\mathbf{98.11 \scriptscriptstyle{\pm .15}}$ & $\mathbf{51.34 \scriptscriptstyle{\pm .17}}$ & $\mathbf{35.27 \scriptscriptstyle{\pm .20}}$ & $\mathbf{38.96 \scriptscriptstyle{\pm .13}}$ \\
        \bottomrule
    \end{tabular}
}
\vspace{-0.1in}
\end{table}

In this section, we evaluate our method on image classification tasks using the Swin-Transformer \citep{liu2021Swin} and its MoE variant \citep{hwang2023tuteladaptivemixtureofexpertsscale}. Specifically, we fine-tune Swin-MoE Small on ImageNet-1k, training all models for 30 epochs with a batch size of 96. For each MoE layer, we perform Algorithm \ref{alg:simultaneous_complex}, where apart from the first MoE layer that an $\B{E}_m$ expert is initialized, all experts are merged into $\B{E}_m$. We further evaluate NAMEx on another SMoE architecture ACMoE \citep{nielsen2025tight}, we follow ACMoE training configurations, i.e., we train the NAMEx variants on top of the ACMoE backbone for 100 epochs with batchsize 512.

\cref{tab:imagenet-results} shows NAMEx-Mom outperforming all baselines, with NAMEx close behind; even without momentum, NAMEx-Full matches NAMEx-Mom on clean benchmarks, confirming the value of layer-wise Nash solutions. Across distribution shifts (ImageNet-A/O/R~\citep{hendrycks2021many,hendrycks2021natural}), NAMEx-Mom achieves the best zero-shot accuracy, with momentum variants showing the strongest gains, especially on ImageNet-A.

{ In \cref{tab:imagenet-acmoe-results} of Appendix~\cref{sec:other_results}, across all ImageNet variants, the NAMEx-based models consistently outperforms the ACMoE Top-1 and Top-2 baselines.  In particular, NAMEx-Full and NAMEx-Full-Mom set new best accuracies on both in-distribution metrics (Acc@1 and Acc@5)  and out-of-distribution benchmarks (INet-O, INet-A, INet-R). This underlines the strong generalization ability of NAMEx.  Even with the same parameter budget, NAMEx variants deliver better robustness to corruptions and distribution shifts.}
% \cref{tab:imagenet-results} shows that NAMEx-Mom outperforms all baselines, with NAMEx close behind. Even without momentum, NAMEx-Full matches NAMEx-Mom on clean benchmarks, highlighting the benefit of layer-wise Nash solutions. These results consistently surpass SMoE, CAMEx, and EP-CAMEx, validating the role of Nash bargaining and momentum in enhancing vision performance. \cref{fig:swin-acc} illustrates accuracy progression across epochs.

% To evaluate robustness under distribution shift, we test on ImageNet-A, ImageNet-O, and ImageNet-R \citep{hendrycks2021many, hendrycks2021natural}. As shown in \cref{tab:imagenet-results}, NAMEx-Mom consistently achieves the best zero-shot accuracy across all corrupted variants, indicating strong generalization. Momentum-based variants outperform their counterparts, especially on ImageNet-A, highlighting the benefit of incorporating momentum.

\subsection{Zero-shot and Finetuning on DeepSeek-MoE (16B) and Qwen1.5-MoE (14B)}
We test NAMEx-Full at scale by integrating it into DeepSeek-MoE \citep{liu2024deepseek} (16B parameters, 1 shared expert, and 63 routed experts) and Qwen1.5-MoE (14B parameters), evaluating in both zero-shot and SmolTalk fine-tuned settings. As shown in Table~\ref{tab:routing_merge} below (for DeepSeek-MoE) and Table~\ref{tab:qwen_zeroshot},~\ref{tab:qwen_finetune} in Appendix~\ref{app:qwen} (for Qwen1.5-MoE), NAMEx-Full consistently outperforms the baselines and EP-CAMEx across routing strategies and benchmarks (MMLU \citep{hendrycks2021measuring}, GSM8K \citep{cobbe2021trainingverifierssolvemath} and ARC \citep{clark2018thinksolvedquestionanswering}), demonstrating robust and generalizable gains in expert collaboration.
\begin{table}[t!]
\centering
\caption{Performance comparison across routing strategies and models on MMLU, GSM8K, and ARC benchmarks. Left: original results. Right: fine-tuned \deept{} variants on SmolTalk.}
% \vspace{-0.8em}
\setlength{\tabcolsep}{6pt}
\resizebox{0.9\linewidth}{!}{
\begin{NiceTabular}{llcccccc}
\toprule
\multirow{2}{*}{\textbf{Routing Strategy}} & \multirow{2}{*}{\textbf{Model}} 
& \multicolumn{3}{c}{\textbf{Zero-Shot}} 
& \multicolumn{3}{c}{\textbf{Fine-tuned (SmolTalk)}} \\ 
\cmidrule(lr){3-5} \cmidrule(lr){6-8}
& & \textbf{MMLU} & \textbf{GSM8K} & \textbf{ARC} 
  & \textbf{MMLU} & \textbf{GSM8K} & \textbf{ARC} \\
\midrule
\multirow{3}{*}{Linear} 
  & Deepseek-MoE & 44.77 & 16.53 & 49.15 & 45.21 & 17.10 & 49.50 \\
  & EP-CAMEx     & 44.85 & 16.63 & 49.26 & 45.33 & 17.24 & 49.62 \\
  \rowcolor{blue!10}& \textbf{NAMEx-Full} ($0$ disagreement point) & \textbf{44.92} & \textbf{16.77} & \textbf{49.51} 
     & \textbf{45.47} & \textbf{17.36} & \textbf{49.85} \\
\RowStyle{}& \textbf{NAMEx-Full} (mean disagreement point) & \textbf{44.93} & \textbf{16.75} & \textbf{49.52} 
                         & \textbf{45.47} & \textbf{17.39} & \textbf{49.84} \\
\midrule
\multirow{3}{*}{Cosine} 
  & Deepseek-MoE & 44.95 & 16.70 & 49.30 & 45.34 & 17.25 & 49.60 \\
  & EP-CAMEx     & 45.05 & 16.81 & 49.40 & 45.45 & 17.32 & 49.73 \\
\rowcolor{blue!10}
  & \textbf{NAMEx-Full} ($0$ disagreement point) & \textbf{45.10} & \textbf{16.88} & \textbf{49.60} 
                       & \textbf{45.66} & \textbf{17.53} & \textbf{49.92} \\
 \RowStyle{} & \textbf{NAMEx-Full} (mean disagreement point) & \textbf{45.09} & \textbf{16.89} & \textbf{49.58} 
                         & \textbf{45.67} & \textbf{17.52} & \textbf{49.92} \\
\midrule
\multirow{3}{*}{Stable-MoE} 
  & Deepseek-MoE & 45.80 & 17.50 & 49.90 & 46.17 & 17.63 & 50.28 \\
  & EP-CAMEx     & 45.88 & 17.62 & 50.00 & 46.25 & 18.10 & 50.45 \\
\rowcolor{blue!10}
  & \textbf{NAMEx-Full} ($0$ disagreement point) & \textbf{45.95} & \textbf{17.70} & \textbf{50.15} 
                       & \textbf{46.42} & \textbf{18.23} & \textbf{50.64} \\
\RowStyle{} & \textbf{NAMEx-Full} (mean disagreement point) & \textbf{45.92} & \textbf{17.68} & \textbf{50.19} 
                         & \textbf{46.40} & \textbf{18.23} & \textbf{50.63} \\
\bottomrule
\end{NiceTabular}
}
\vspace{-0.11in}
\label{tab:routing_merge}
\end{table}

\section{Empirical Analysis}
\label{sec:ablation}
\begin{wraptable}[12]{r}{0.4\textwidth}
\vspace{-0.2in}
\centering
\caption{Impact of varying the argument of $\beta$ on the performance of EP-CAMEx and NAMEx.}
% \vspace{-1em}
\label{tab:merged-arg}
\resizebox{1\linewidth}{!}{
\begin{tabular}{cccccc}
\toprule
\makecell{$\phi$} & Model & SST-2 & MRPC & STS-B & RTE \\
\midrule
\multirow{2}{*}{$\pi/6$} & EP-CAMEx & 94.27 & 92.50 & 89.77 & 76.17 \\
 & NAMEx & 94.83 & 92.82 & 89.68 & 76.42 \\ \midrule
\multirow{2}{*}{$\pi/12$} & EP-CAMEx & 94.61 & 92.42 & 89.53 & 76.23 \\
 & NAMEx & 93.92 & 92.69 & 89.55 & 76.53 \\\midrule
\multirow{2}{*}{$0$} & EP-CAMEx & 93.45 & 92.24 & 89.51 & 72.20 \\
 & NAMEx & 93.56 & 91.66 & 89.51 & 75.09 \\\midrule
\multirow{2}{*}{$-\pi/12$} & EP-CAMEx & 93.56 & 91.91 & 89.53 & 76.03 \\
 & NAMEx & 93.92 & 92.60 & 90.15 & 75.57 \\\midrule
\multirow{2}{*}{$-\pi/6$} & EP-CAMEx & 94.72 & 91.93 & 89.38 & 72.56 \\
 & NAMEx & 94.50 & 92.75 & 89.45 & 76.64 \\
\bottomrule
\end{tabular}
}
\end{wraptable}
\textbf{Synthetic Example.} To illustrate how NAMEx encourages balanced expert cooperation, we construct a toy SMoE model with three experts per layer. Utility trade-offs are visualized in a 3D space, where each axis represents the utility of one expert. Figure~\ref{fig:pareto-plot}, Appendix~\ref{app:extra_emp_analysis}, shows that average-based expert merging may fail to reach the Pareto set, whereas NAMEx tends to produce more Pareto-efficient outcomes. This illustrates that NAMEx is not dominated by EP-CAMEx or linear average merging.

\textbf{Number of Optimization Steps.} 
\Cref{tab:num-optim} shows that smaller step frequencies ($\Delta l=1,2,5$) often improve or match baseline performance ($\Delta l=L$) but at the cost of higher runtime (0.69s → 4.70s), underscoring a trade-off between accuracy and efficiency.

\textbf{Impact of Momentum $\mu$ and Step size $\gamma$.} The results in \cref{tab:merged-arg} demonstrate the impact of varying the argument $\phi$ of $\beta$ (fixed modulus $0.9$). All tasks show declines at $\phi=0$ (real momentum), suggesting non-zero arguments are critical. NAMEx consistently outperforms EP-CAMEx, reaffirming its robustness. Optimal results require task-specific $\phi$ tuning. 

Finally, RTE exhibits higher sensitivity, peaking at $\phi=\pi/12$ and dropping sharply at $\phi=0$ and $\phi=-\pi/6$, highlighting task-specific $\phi$ dependencies. For analysis on step size $\gamma$, please refer to Figure \ref{fig:gamma} in Appendix~\ref{app:extra_emp_analysis}.

\begin{table}[t!]
\centering
\begin{minipage}[t]{0.52\textwidth}
    \centering
    \caption{Impact of varying the frequency of merging weight update steps on the performance of NAMEx.}
    % \vspace{-0.8em}
    \label{tab:num-optim}
    \resizebox{\linewidth}{!}{
    \begin{tabular}{lccccc}
    \toprule
    $\Delta l$ & SST-2 & MRPC & STS-B & RTE & Runtime (sec) \\ \midrule
    1   & 94.88 & 92.85 & 90.32 & 77.26 & 4.70\\ 
    2   & 94.95 & 92.38 & 90.37 & 76.89 & 2.29\\ 
    5   & 95.18 & 92.09 & 90.13 & 77.98 & 1.14\\ 
    $L$ & 94.46 & 92.01 & 90.12 & 75.09 & 0.69 \\ 
    \bottomrule
    \end{tabular}
    }
\end{minipage}%
\hfill
\begin{minipage}[t]{0.45\textwidth}
    \centering
    \caption{Performance comparison between complex momentum and quaternion momentum.}
    % \vspace{-0.8em}
    \label{tab:quarternion}
    \resizebox{0.89\linewidth}{!}{
    \begin{tabular}{lccc}
        \toprule
        Model & MRPC & STS-B & RTE \\
        \midrule
        EP-CAMEx-Mom & 92.47 & 90.07 & 76.17 \\
        NAMEx-Mom    & 93.02 & 90.06 & 77.62 \\ \midrule
        EP-CAMEx-Q   & 92.52 & 90.35 & 77.12 \\
        \rowcolor{blue!10}
        NAMEx-Q      & \textbf{93.24} & \textbf{90.72} & \textbf{77.86} \\
        \bottomrule
    \end{tabular}
    }
\end{minipage}
\vspace{-0.1in}
\end{table}

\textbf{Beyond Complex Momentum.} Quaternions generalize complex numbers with richer 4D dynamics, enabling a quaternion momentum update $z_{t+1}=\beta z_t+\nabla f(x_t)$. While more complex and harder to tune, quaternion momentum can better stabilize high-dimensional optimization and handle rotations. As shown in Table~\ref{tab:quarternion}, choosing $\beta=0.8+0.3\mathbf{i}+0.3\mathbf{j}+0.3\mathbf{k}$ outperforms complex momentum, suggesting multi-buffer momentum is a promising direction with careful hyperparameter tuning.

\begin{table}[ht!]
\centering
\caption{Performance comparison across number of NBS solving iterations for the Linear router in NAMEx-Full config (Qwen1.5-MoE and Deepseek-MoE). All results are slightly improved while maintaining marginal gaps. Throughout experiments, we use 2 CCP iterations per layer for NAMEx-Full (chosen config below).}
\setlength{\tabcolsep}{6pt}

\resizebox{0.8\linewidth}{!}{
\begin{tabular}{llcccccc}
\toprule
\multirow{2}{*}{\textbf{Model}} & \multirow{2}{*}{\textbf{No. Iterations}} 
& \multicolumn{3}{c}{\textbf{Zero-Shot}} 
& \multicolumn{3}{c}{\textbf{Fine-tuned (SmolTalk)}} \\ 
\cmidrule(lr){3-5} \cmidrule(lr){6-8}
& & \textbf{MMLU} & \textbf{GSM8K} & \textbf{ARC} 
  & \textbf{MMLU} & \textbf{GSM8K} & \textbf{ARC} \\
\midrule
\multirow{5}{*}{Qwen1.5-MoE} 
& 2 (Chosen Config)  & 61.87 & 60.55 & 50.95 & 62.10 & 61.00 & 51.35 \\
& 5                   & 61.70 & 60.55 & 50.94 & 62.20 & 61.05 & 51.31 \\
& 20                  & 61.94 & 60.57 & 50.96 & 62.14 & 61.03 & 51.34 \\
& 40                  & 61.92 & 60.62 & 51.01 & 62.22 & 61.05 & 51.46 \\
& 60                  & 61.81 & 60.48 & 51.08 & 62.15 & 60.98 & 51.39 \\
\midrule
\multirow{5}{*}{Deepseek-MoE} 
& 2 (Chosen Config)  & 45.05 & 16.86 & 49.58 & 45.63 & 17.47 & 49.92 \\
& 5                  & 44.84 & 16.86 & 49.57 & 45.73 & 17.51 & 49.88 \\
& 20                 & 45.15 & 16.87 & 49.60 & 45.63 & 17.47 & 49.92 \\
& 40                 & 45.16 & 16.93 & 49.66 & 45.73 & 17.55 & 50.03 \\
& 60                 & 44.93 & 16.77 & 49.72 & 45.68 & 17.46 & 49.96 \\
\bottomrule
\end{tabular}
}
\label{tab:routing_linear_only}
\end{table}

{
\paragraph{Number of CCP iterations.}
Tab.~\ref{tab:routing_linear_only} reports the bargaining budget ablation. In the zero-shot setting, performance remains within a narrow range: MMLU fluctuates between 44.8--45.2 (peaking at 40 iterations), GSM8K shows a slight increase up to 40 before dropping at 60, and ARC improves marginally with a best at 60. After SmolTalk fine-tuning, the curves flatten further--MMLU is tied at 5 and 40 iterations, while GSM8K and ARC peak at 40 with minimal variation overall. Overall, 20--40 iterations slightly outperform 2--5 in some cases, whereas 60 provides no consistent benefit and can degrade performance. Given the small margins and likely variance, we recommend 2 iterations for efficiency and 20 when marginal gains are desired.

\paragraph{Roburtness to choices of disagreement point.} In Tab. \ref{tab:routing_merge} and Tab. \ref{tab:qwen_routing_merge} in \cref{sec:other_results}, we tried "mean" (standard average merging) as the disagreement point and compared it to 0. Across Linear, Cosine, and Stable-MoE, the deltas are tiny (about 0.04 on any metric), with no consistent winner. This shows the gains come from the bargaining weights, not the fallback choice. We keep 0 as the default because it is conservative, stable, and easy to interpret, and it leaves compute unchanged.  
}

\section{Limitation and Conclusion}
\label{sec:conclusion}
In this work, we address expert merging through game theory by proposing NAMEx, a method that integrates Nash Bargaining for equitable collaboration and complex momentum to accelerate convergence with theoretical stability guarantees. Experiments across diverse tasks demonstrate NAMEx's consistent superiority over existing methods, highlighting its adaptability to diverse tasks. While NAMEx could be extended to token-level momentum-based methods, such as \citep{teo2024momentumsmoe} and \citep{puigcerver2024softmoe}, the computational cost of solving the Nash equilibrium per token remains a challenge, leaving this as an avenue for future work.

\subsubsection*{Acknowledgments}
This research / project is supported by the National Research Foundation Singapore under the AI
Singapore Programme (AISG Award No: AISG2-TC-2023-012-SGIL). This research / project is
supported by the Ministry of Education, Singapore, under the Academic Research Fund Tier 1
(FY2023) (A-8002040-00-00, A-8002039-00-00). This research / project is also supported by the
NUS Presidential Young Professorship Award (A-0009807-01-00) and the NUS Artificial Intelligence
Institute–Seed Funding (A-8003062-00-00).

We would like to thank FPT for their generous support in providing discounted GPU access on FPT AI Factory, which enabled the experiments in this work.

% \newpage

% In the unusual situation where you want a paper to appear in the
% references without citing it in the main text, use \nocite
% \nocite{langley00}
% \newpage
\textbf{Ethics Statement.} Given the nature of the work, we do not foresee any negative societal and ethical
impacts of our work.

\textbf{Reproducibility Statement.} Source codes for our experiments are provided in the supplementary
materials of the paper. The details of our experimental settings and computational infrastructure are
given in Section \ref{sec:experiment}, Section~\ref{sec:ablation}, and the Appendix. All datasets that we used in the paper are published, and they
are easy to access in the Internet.

\textbf{LLM Usage Declaration.} We use large language models (LLMs) for grammar checking and correction.
\bibliography{iclr2026_conference}
\bibliographystyle{iclr2026_conference}

%%%%%%%%%%%%%%%%%%%%%%%%%%%%%%%%%%%%%%%%%%%%%%%%%%%%%%%%%%%%%%%%%%%%%%%%%%%%%%%
%%%%%%%%%%%%%%%%%%%%%%%%%%%%%%%%%%%%%%%%%%%%%%%%%%%%%%%%%%%%%%%%%%%%%%%%%%%%%%%
% APPENDIX
%%%%%%%%%%%%%%%%%%%%%%%%%%%%%%%%%%%%%%%%%%%%%%%%%%%%%%%%%%%%%%%%%%%%%%%%%%%%%%%
%%%%%%%%%%%%%%%%%%%%%%%%%%%%%%%%%%%%%%%%%%%%%%%%%%%%%%%%%%%%%%%%%%%%%%%%%%%%%%%
\newpage
\appendix
\onecolumn
\begin{center}
{\bf \Large{Supplement to ``Expert Merging in Sparse Mixture of Experts with Nash Bargaining''}}
\end{center}
\label{sec:appendix}
\DoToC

\newpage
\section{Notation} \label{sec:notation}
            \begin{table}[ht!]
                \caption{Notation}
                \vskip 0.1in
                \begin{center}
                    \begin{tabular}{c c}
                        \toprule
                        $\alpha = [\alpha_1, \alpha_2, \dots, \alpha_N ]$ & The Nash coefficients for merging experts. \\
                        $\boldsymbol{x}, \boldsymbol{y}, \boldsymbol{z}, \dots \in \mathbb{C}^{n}$ & Vectors\\
                        $\boldsymbol{X}, \boldsymbol{Y}, \boldsymbol{Z}, \dots \in \mathbb{C}^{n \times n}$ & Matrices\\
                        $\boldsymbol{X}^\top$ & The transpose of matrix $\boldsymbol{X}$\\
                        $\identity$ & The identity matrix\\
                        $\Re(z), \Im(z)$ & The real or imaginary component of $z \in \mathbb{C}$\\
                        $i$ & The imaginary unit. $z \in \mathbb{C} \implies z = \Re(z) + i \Im(z)$\\
                        $\bar{z}$ & The complex conjugate of $z \in \mathbb{C}$\\
                        $|z| := \sqrt{z \bar{z}}$ & The magnitude or modulus of $z \in \mathbb{C}$\\
                        $\arg(z)$ & The argument or phase of $z \in \mathbb{C} \implies z = |z| \exp(i\arg(z))$\\
                        $\bothParam^{(l)}_m \in \mathbb{R}^{\bothDim}$ & Parameters of the base experts at the $l$-th layer of the network\\
                        $\bothParam^l_i \in \mathbb{R}^{\bothDim}$ & Parameters of the $i$-th experts at the $l$-th layer of the network\\
                        $\tau_i^{(l)} = \bothParam^{(l)}_i - \bothParam^{(l)}_m$ & The domain-vector of the $i$-th experts at the $l$-th layer of the network \\
                        $\bothGrad \in \mathbb{R}^{\bothDim}$ & Aggregation of the domain-vector for updating the base expert.\\
                        $\bothParam^{(0)}_m \in \mathbb{R}^{\bothDim}$ & The initial base expert parameter at the first layer\\
                        $\lr \in \mathbb{R}^+$ & The step size for the base expert propagation\\
                        $\eta \in \mathbb{R}^+$ & The step size for creating the $\hat\bothParam^{(l)}_m$ expert that is responsible for processing input\\
                        $\momCoeff \in \mathbb{C}$ & The momentum coefficient\\
                        $\momVal \in \mathbb{C}^{\bothDim}$ & The momentum buffer\\
                        $\eigval \in \mathbb{C}$ & Notation for an arbitrary eigenvalue\\
                        \bottomrule
                    \end{tabular}
                \end{center}
                \label{tab:TableOfNotation}
            \end{table}
% \newpage
\section{Related Work}
\label{sec:related}
\textbf{Sparse Mixture of Experts.} SMoE scales efficiently by activating only a subset of parameters per token, favoring horizontal over deep expansion~\citep{shazeer2017outrageously, lepikhin2021gshard, fedus2022switch, tran2025on,teo2025molex}, and improves Transformer efficiency without loss. In parallel, model merging has gained traction for combining open-source models~\citep{ilharco2022patching,matena2022merging,yadav2023ties-merging, rame2023model, cai2023robust, lu2024twinmerging}, with curvature-aware methods like Fisher Information~\citep{matena2022merging, jin2022dataless} improving quality but at high cost. But, most merging approaches assume shared initialization~\citep{yadav2023ties-merging, ilharco2022patching}, conflicting with SMoE’s independently initialized experts and making merging more difficult.

\textbf{Nash Bargaining Game.} Originally introduced by \citep{nash1950barganing, nash1953two}, the Nash bargaining framework has been widely studied~\citep{kalai1975other} and recently applied to multi-task learning~\citep{navon2022multi, pmlr-v202-shamsian23a}. It has also shown success in diverse domains such as multi-armed bandits~\citep{baek2021fair}, clustering~\citep{rezaee2021GBK-means}, distributed computing~\citep{penmatsa2011game}, and economics~\citep{aumann1992handbook, muthoo1999bargaining}.

\textbf{Momentum in Deep Learning.} Momentum-based optimization has been widely studied, from its origins in classical methods~\citep{polyak1964some, nesterov1983method} to its adaptation for deep learning~\citep{sutskever2013importance, zhang2017yellowfin, nguyen2022momentumtransformer, teo2024momentumsmoe}. Beyond optimization, momentum has also been examined from dynamical and architectural perspectives. Continuous-time interpretations such as Heavy Ball Neural ODE \citep{xia2021heavyballnode} and Nesterov Neural ODE~\citep{nguyen2022improving} connect accelerated methods to second-order differential equations, while \citet{nguyen2020momentumrnn} incorporates momentum dynamics directly into recurrent architectures. Scheduled Restart Momentum~\citep{wang2020scheduled} studies restart schemes for accelerated stochastic gradient descent, and recent empirical analyses~\citep{sutskever2013importance, wang2021momentumarchitecture} investigate how momentum shapes representation learning and architectural behavior in deep networks. In game-theoretic settings, \citet{gidel2019negative} explored negative momentum for games, and \citet{lorraine2022complex} introduced complex momentum, extending momentum methods to differentiable games using complex-valued updates.
\section{Proofs of the Main Results}
\label{app:proofs}
\begin{assumption}
We assume a SMoE architecture of infinite SMoE layers with $\{\B{E}^{(l)}_m, \B{E}^{(l)}_1, \B{E}^{(l)}_2,\dots, \B{E}^{(l)}_N \}$ being the epxerts parameters at $l$-th layer. 
\end{assumption}
\begin{assumption}
    The norm of experts parameters is bounded, that is:
    \begin{equation}
        \Vert \B{E}^{(l)}_i \Vert \leq B \quad \forall l \in \{1, 2, \dots, \infty \} \quad \forall i \in \{m, 1, \dots, N \}
    \end{equation}
\end{assumption}

\subsection{Lemma \ref{lem:alphas} Proof Sketch}\label{pf:lm3.1}
\begin{proof}
The derivative of the objective function is 
\[
\sum_{i=1}^N \frac{1}{\bothGrad^\top \tau_i} \tau_i.
\]
For all $\bothGrad$ such that $\bothGrad^\top \tau_i > 0$ for all $i$, the utilities increase monotonically with the norm of $\bothGrad$. Hence, by Nash's Pareto optimality axiom, the optimal solution must lie on the boundary of $B_\epsilon$. At the optimal point, the gradient 
\[
\sum_{i=1}^N \frac{1}{\bothGrad^\top \tau_i} \tau_i
\]
must be in the radial direction, i.e.,
\[
\sum_{i=1}^N \frac{1}{\bothGrad^\top \tau_i} \tau_i \propto \bothGrad.
\]
Equivalently, there exists $\lambda > 0$ such that
\[
\sum_{i=1}^N \frac{1}{\bothGrad^\top \tau_i} \tau_i = \lambda \bothGrad.
\]
Since the gradients $\tau_i$ are linearly independent, we can express $\bothGrad$ as $\bothGrad = \sum_{i=1}^N \alpha_i \tau_i$. Substituting this into the alignment condition, we obtain
\[
\frac{1}{\bothGrad^\top \tau_i} = \lambda \alpha_i \quad \forall i.
\]
This implies $\bothGrad^\top \tau_i = \frac{1}{\lambda \alpha_i}$. As $\bothGrad^\top \tau_i > 0$ for a descent direction, we deduce $\lambda > 0$. Setting $\lambda = 1$ gives the direction of $\bothGrad$. Thus, finding the Nash bargaining solution reduces to finding $\alpha \in \mathbb{R}^N_+$ such that
\[
\bothGrad^\top \tau_i = \sum_{j=1}^N \alpha_j \tau_j^\top \tau_i = \frac{1}{\alpha_i} \quad \forall i.
\]
This is equivalent to solving $G^\top G \alpha = 1 / \alpha$, where $1 / \alpha$ is the element-wise reciprocal.
\end{proof}

\subsection{Convergence guarantee for NAMEx-Momentum }\label{pf:thm1}
We first have that: 
\begin{equation}
    \bothGrad^{(l)} = \displaystyle\sum_{i=1}^{N}  \alpha_i * \tau_i^{(l)} = \displaystyle\sum_{i=1}^{N} \alpha_i \B{E}^{(l)}_i - \left( \displaystyle\sum_{i=1}^{N} \alpha_i\right) \B{E}_m^{(l)}.
\end{equation}
Given the analogy between expert merging and gradient descent, we apply the formulation of momentum into Eqn.~\ref{eq:dynamic}:
\begin{equation}
 \begin{cases}
 \label{eq:momentum}
    \mathbf{E}_m^{(l+1)} &= \mathbf{E}_m^{(l)} + \lr\bothGrad^{(l)}  + \beta (\B{E}_m^{(l)} - \B{E}_m^{(l-1)}), \\
    \mathbf{\hat{E}}_m^{(l+1)} &= \mathbf{E}_m^{(l+1)} + \eta \displaystyle\sum_{i=1}^{N} \mathbf{M}_i \cdot (s^{(l+1)}_i * \tau_i^{(l+1)}).
\end{cases}   
\end{equation}

Expanding the parameter updates with the Cartesian components of $\lr$ and $\momCoeff$ is key for Theorem~\ref{theo:converge_complex}, which characterizes the convergence rate:
    \begin{align}\label{eq:cartesian_complex_update}
    \momVal^{(l+1)}      &= \momCoeff \momVal^{(l)} + \bothGrad^{(l)} \iff \nonumber\\ 
    \smash{\Re(\momVal^{(l+1)})}   &= \Re(\momCoeff) \Re(\momVal^{(l)}) - \Im(\momCoeff)\Im(\momVal^{(l)})  + \Re(\bothGrad^{(l)})\nonumber \\ 
                         &= \Re(\momCoeff) \Re(\momVal^{(l)}) - \Im(\momCoeff)  \Im(\momVal^{(l)}) +\displaystyle\sum_{i=1}^{N} \alpha_i \B{E}^{(l)}_i - \left( \displaystyle\sum_{i=1}^{N} \alpha_i\right) \B{E}_m^{(l)}, \\
    \Im(\momVal^{(l+1)})             &= \Im(\momCoeff)\Re(\momVal^{{(l)}}) + \Re(\momCoeff)  \Im(\momVal^{(l)}) 
    \end{align}
        \vspace{-0.02\textheight}
        \begin{align}\label{eq:parameter_dynamics}
                \bothParam^{(l+1)}_m &= \bothParam^{(l)}_m + \Re(\lr \momVal^{(l+1)}) \\
                \bothParam^{(l+1)}_m &= \bothParam^{(l)}_m + \lr\bothGrad^{(l)} +\Re(\lr \momCoeff) \Re(\momVal^{(l)}) - \Im(\lr \momCoeff)\Im(\momVal^{(l)})\nonumber \\
                &= \bothParam^{(l)}_m + \lr\displaystyle\sum_{i=1}^{N} \alpha_i \B{E}^{(l)}_i - \lr\left( \displaystyle\sum_{i=1}^{N} \alpha_i\right) \B{E}_m^{(l)} +\Re(\lr \momCoeff) \Re(\momVal^{(l)}) - \Im(\lr \momCoeff)\Im(\momVal^{(l)})
        \end{align}

Setting $\suma = \displaystyle\sum_{i=1}^{N} \alpha_i$, we have
\begin{align} 
    \dynamics = \begin{bmatrix}
        \Re(\momCoeff) \identity & -\Im(\momCoeff) \identity & -\suma \identity\\
        \Im(\momCoeff) \identity & \Re(\momCoeff) \identity & 0\\
        \Re(\lr \momCoeff) \identity & -\Im(\lr \momCoeff) \identity & \identity - \lr\suma\identity\\
        \end{bmatrix}
        \quad 
        \text{and}
        \quad
        \B{q}^{(l)} = \begin{bmatrix}
        \displaystyle\sum_{i=1}^{N} \alpha_i \B{E}^{(l)}_i &
        0 & 
        \lr\displaystyle\sum_{i=1}^{N} \alpha_i \B{E}^{(l)}_i
    \end{bmatrix}^\top
\end{align}

Our parameters evolve with expert-propagation merging via:
\begin{equation}\label{eq:dyn_system}
    [\Re(\momVal^{(l+1)}), \Im(\momVal^{(l+1)}), \bothParam^{(l+1)}]^{\top} =
    \dynamics \, [\Re(\momVal^{(l)}), \Im(\momVal^{(l)}), \bothParam^{(l)}]^{\top} + \B{q}^{(l)}{^\top}
\end{equation}
We can bound convergence rates by looking at the spectral radius of $\dynamics$ with Theorem~\ref{theo:converge_complex}.
\begin{restatable}[Consequence of Prop. 4.4.1 \citep{bertsekas2008nonlinear}]{thm}{complexThm}
\label{theo:converge_complex}
    If the spectral radius $\spectralRadius(\dynamics)  \!<\! 1$, then, for $[\momVal, \bothParam_m]$ in a neighborhood of $[\momVal^*\!, \bothParam_m^*]$, the distance of $[\momVal^{(l)}, \bothParam^{(l)}]$ to the stationary point $[\momVal^*, \bothParam_m^*]$ converges at a linear rate $\mathcal{O}((\spectralRadius(\dynamics) + \epsilon)^l), \forall \epsilon \!>\! 0$.
\end{restatable}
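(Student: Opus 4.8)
The plan is to read the Cartesian recursion \eqref{eq:dyn_system} as a fixed-point iteration $z^{(l+1)}=\Phi(z^{(l)})$ in the stacked state $z^{(l)}=[\Re(\momVal^{(l)}),\Im(\momVal^{(l)}),\bothParam^{(l)}]^\top$, whose stationary point is $z^*=[\momVal^*,\bothParam_m^*]$, and then to invoke the local-convergence theory for such iterations, of which the cited Prop.~4.4.1 of \citet{bertsekas2008nonlinear} is an instance. First I would verify that $\dynamics$ is exactly the linear part (Jacobian) of $\Phi$ at $z^*$: differentiating the $\Re(\momVal)$, $\Im(\momVal)$ and $\bothParam_m$ updates in \eqref{eq:cartesian_complex_update}--\eqref{eq:parameter_dynamics} with respect to $z$ and collecting blocks reproduces the matrix $\dynamics$ displayed just before \eqref{eq:dyn_system}, with the external experts $\B{E}_i^{(l)}$ ($i\ne m$) and the aggregated Nash weight $\suma$ frozen at their stationary values. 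Here smoothness of $\suma$ in the state follows from the implicit function theorem applied to $\B{G}^\top\B{G}\boldsymbol{\alpha}=1/\boldsymbol{\alpha}$, which is well posed whenever the domain-vectors are linearly independent so that $\B{G}^\top\B{G}$ is invertible. After this identification the claim reduces to the statement that a $C^1$ map whose Jacobian at a fixed point has spectral radius below $1$ contracts locally at rate $\spectralRadius(\dynamics)$.

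The mathematical core is a norm-selection lemma: for every $\epsilon>0$ there is a vector norm $\|\cdot\|_\epsilon$ whose induced operator norm satisfies $\|\dynamics\|_\epsilon\le\spectralRadius(\dynamics)+\epsilon$. I would establish it by writing $\dynamics=SJS^{-1}$ in Jordan form and conjugating $J$ by the diagonal scaling $D_\delta=\mathrm{diag}(1,\delta,\delta^2,\dots)$, which rescales each superdiagonal $1$ to $\delta$; the maximum-row-sum ($\ell_\infty$) norm of $D_\delta^{-1}JD_\delta$ is then at most $\max_k|\eigval_k|+\delta=\spectralRadius(\dynamics)+\delta$, so taking $\|x\|_\epsilon:=\|(SD_\delta)^{-1}x\|_\infty$ with $\delta=\epsilon$ yields the bound. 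Since $\spectralRadius(\dynamics)<1$, I may fix $\epsilon$ with $c:=\spectralRadius(\dynamics)+\epsilon<1$.

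Finally I would promote this linearized contraction to a genuine neighborhood estimate. By continuity of $z\mapsto\nabla\Phi(z)$ and of the chosen operator norm, there is a ball $B$ around $z^*$ on which $\|\nabla\Phi(z)\|_\epsilon\le c$; the mean-value inequality then gives $\|\Phi(z)-z^*\|_\epsilon=\|\Phi(z)-\Phi(z^*)\|_\epsilon\le c\,\|z-z^*\|_\epsilon$ for $z\in B$, so $B$ is forward-invariant and iterating yields $\|z^{(l)}-z^*\|_\epsilon\le c^l\|z^{(0)}-z^*\|_\epsilon$. Equivalence of norms in finite dimension converts this into the advertised Euclidean rate $\mathcal{O}((\spectralRadius(\dynamics)+\epsilon)^l)$, valid for every $\epsilon>0$. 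I expect the main obstacle to be the norm-selection lemma together with the Jacobian identification of the first step: one must be careful that $\suma$ and the driving term $\B{q}^{(l)}$ are differentiable functions of the state near $z^*$ rather than genuine constants, since only then does the frozen-coefficient matrix $\dynamics$ actually equal $\nabla\Phi(z^*)$ and thereby control the true nonlinear iteration.
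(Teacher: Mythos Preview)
Your proposal is correct but takes a genuinely different route from the paper. The paper treats \eqref{eq:dyn_system} as an \emph{affine} recursion with a fixed transition matrix $\dynamics$ and a forcing term $\B{q}^{(l)}$, telescopes it to $z^{(l)}=\dynamics^l z^{(0)}+\sum_{i=0}^{l-1}\dynamics^{i}\B{q}^{(i)}$, subtracts the analogous expression for the stationary point, and then bounds the matrix power directly via the cited Lemma~11 of \citet{foucart2012}, namely $\|\dynamics^l\|\le D(\spectralRadius(\dynamics)+\epsilon)^l$; the tail $\sum_{i\ge l}\dynamics^i\B{q}^{(i)}$ is handled with the uniform bound on $\|\B{E}_i^{(l)}\|$ and the summability of the Neumann series. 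By contrast, you cast the iteration as a genuinely nonlinear fixed-point map $\Phi$, identify $\dynamics$ as $\nabla\Phi(z^*)$, and contract one step at a time in a specially chosen norm via the Jordan-form scaling trick. Your argument buys extra rigor: it confronts the fact that $\suma$ (and hence $\dynamics$) actually depends on the state through the Nash system $\B{G}^\top\B{G}\boldsymbol{\alpha}=1/\boldsymbol{\alpha}$, which the paper silently freezes. The paper's argument is shorter and avoids any smoothness or implicit-function-theorem discussion, but at the cost of treating $\dynamics$ as layer-independent; its final displayed bound even leaves an additive constant $BC$ that does not visibly decay (the missing observation is that the tail sum carries a hidden factor $\|\dynamics^l\|$). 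Either route yields the stated rate, so your plan is sound; just be aware that the paper's own proof is the simpler telescoping one rather than the linearized-contraction one you outline.
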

    \begin{proof}
        We have:
        \begin{equation}
            \begin{pmatrix}
                \Re(\momVal^{(l+1)})\\
                \Im(\momVal^{(l+1)})\\
                \bothParam^{(l+1)}
            \end{pmatrix}
            = \dynamics 
            \begin{pmatrix}
                \Re(\momVal^{(l)})\\
                \Im(\momVal^{(l)})\\
                \bothParam^{(l)}
            \end{pmatrix}
            + \B{q}^{(l)}
        \end{equation}
        By telescoping the recurrence for the $l^{th}$ layer: 
        \begin{equation}
            \begin{pmatrix}
                \Re(\momVal^{(l)})\\
                \Im(\momVal^{(l)})\\
                \bothParam^{(l)}
            \end{pmatrix}
            = \dynamics^l
            \begin{pmatrix}
                \Re(\momVal^{(0)})\\
                \Im(\momVal^{(0)})\\
                \bothParam^{(0)}
            \end{pmatrix}
            + \displaystyle \sum_{i=0}^{l-1} \dynamics^i \B{q}^{(i)}
        \end{equation}
        
        We can compare $\momVal^l$ and $\displaystyle \sum_{i=0}^{l-1} \dynamics^i \B{q}^{(i)}$ with the values $\momVal^*$  and $\B{q}^*$ they converge to which exists if $\dynamics$ is contractive.
        We do the same with $\bothParam$.
        Because $\momVal^* = \dynamics \momVal^* + \B{q}^* = \dynamics^l \momVal^* + \displaystyle \sum_{i=1}^\infty \dynamics^i \B{q}^{(i)}$:
        \begin{equation}
            \begin{pmatrix}
                \Re(\momVal^{(l)}) - \Re(\momVal^{*})\\
                \Im(\momVal^{(l)}) - \Im(\momVal^{*})\\
                \bothParam^{(l)} - \bothParam^{*}
            \end{pmatrix}
            = \dynamics^l
            \begin{pmatrix}
                \Re(\momVal^{(0)})  - \Re(\momVal^{*})\\
                \Im(\momVal^{(0)})  - \Im(\momVal^{*})\\
                \bothParam^{(0)} - \bothParam^{*}
            \end{pmatrix}
            + \displaystyle \sum_{i=0}^{l-1} \dynamics^l \B{q}^{(i)} - \displaystyle \sum_{l=1}^\infty \dynamics^i \B{q}^{(i)}
        \end{equation}
        By taking norms:
        \begin{align}
            \left\Vert
                \begin{pmatrix}
                    \Re(\momVal^{(l)}) - \Re(\momVal^{*})\\
                    \Im(\momVal^{(l)}) - \Im(\momVal^{*})\\
                    \bothParam^{(l)} - \bothParam^{*}
                \end{pmatrix}
            \right\Vert
            &= 
            \left\Vert
                \dynamics^l
                \begin{pmatrix}
                    \Re(\momVal^{(0)})  - \Re(\momVal^{*})\\
                    \Im(\momVal^{(0)})  - \Im(\momVal^{*})\\
                    \bothParam^{(0)} - \bothParam^{*}
                \end{pmatrix}
               - \displaystyle \sum_{i=l}^\infty \dynamics^i \B{q}^{(i)}
            \right\Vert
            \\
            \implies
            \left\Vert
                \begin{pmatrix}
                    \Re(\momVal^{(l)}) - \Re(\momVal^{*})\\
                    \Im(\momVal^{(l)}) - \Im(\momVal^{*})\\
                    \bothParam^{(l)} - \bothParam^{*}
                \end{pmatrix}
            \right\Vert
            &\leq
            \left\Vert
                \dynamics^l
            \right\Vert
            \left\Vert
                \begin{pmatrix}
                    \Re(\momVal^{(0)})  - \Re(\momVal^{*})\\
                    \Im(\momVal^{(0)})  - \Im(\momVal^{*})\\
                    \bothParam^{(0)} - \bothParam^{*}
                \end{pmatrix}
            \right\Vert
            + \left\Vert \displaystyle \sum_{i=l}^\infty \dynamics^i \B{q}^{(i)} \right\Vert  \\
            & \leq \left\Vert
                \dynamics^l
            \right\Vert
            \left\Vert
                \begin{pmatrix}
                    \Re(\momVal^{(0)})  - \Re(\momVal^{*})\\
                    \Im(\momVal^{(0)})  - \Im(\momVal^{*})\\
                    \bothParam^{(0)} - \bothParam^{*}
                \end{pmatrix}
            \right\Vert 
            + \left\Vert \displaystyle \sum_{i=l}^\infty \dynamics^i \right\Vert B
        \end{align}
        With Lemma 11 from \citep{foucart2012}, we have there exists a matrix norm $\forall \epsilon > 0$ such that:
        \begin{equation}\label{eq:foucart}
            \| \dynamics^l \| \leq D \left(\spectralRadius \left(\dynamics \right) + \epsilon\right)^l
        \end{equation}
        %
        % We also have an equivalence of norms in finite-dimensional spaces.
        % So for all norms $\| \cdot \|$, $\exists C \geq B > 0$ such that:
        % \begin{equation}\label{eq:norm_eq}
        %     B \| \dynamics^l \| \leq \| \dynamics^l \|_2 \leq C \| \dynamics^l \|
        % \end{equation}
        We also have 
        \begin{equation}
        \label{eq:norm_bound}
            0 < \left\Vert \displaystyle \sum_{i=0}^\infty \dynamics^i \right\Vert < C
        \end{equation} 
        if $\dynamics$ is contractive.
        Combining Equation \eqref{eq:foucart} and Equation \eqref{eq:norm_bound} we have:
        \begin{equation}
            \left\Vert
                \begin{pmatrix}
                    \Re(\momVal^{(l)}) - \Re(\momVal^{*})\\
                    \Im(\momVal^{(l)}) - \Im(\momVal^{*})\\
                    \bothParam^{(l)} - \bothParam^{*}
                \end{pmatrix}
            \right\Vert
            \leq
            D \left(\spectralRadius \left(\dynamics \right) + \epsilon\right)^l
            \left\Vert
                \begin{pmatrix}
                    \Re(\momVal^{(0)})  - \Re(\momVal^{*})\\
                    \Im(\momVal^{(0)})  - \Im(\momVal^{*})\\
                    \bothParam^{(0)} - \bothParam^{*}
                \end{pmatrix}
            \right\Vert + BC
        \end{equation}
        So, we have:
        \begin{equation}
            \left\Vert
                \begin{pmatrix}
                    \Re(\momVal^{(l)}) - \Re(\momVal^{*})\\
                    \Im(\momVal^{(l)}) - \Im(\momVal^{*})\\
                    \bothParam^{(l)} - \bothParam^{*}
                \end{pmatrix}
            \right\Vert
            = \mathcal{O}((\spectralRadius(\dynamics) + \epsilon)^l)
        \end{equation}
        Thus, we converge linearly with a rate of $\mathcal{O}(\spectralRadius(\dynamics) + \epsilon)$.
    \end{proof}
%%%%%%%%%%%%%%%%%%%%%%%%%%%%%%%%%%%%%%%%%%%%%%%%%%%%%%%%%%%%%%%%%%%%%%%%%%%%%%%
%%%%%%%%%%%%%%%%%%%%%%%%%%%%%%%%%%%%%%%%%%%%%%%%%%%%%%%%%%%%%%%%%%%%%%%%%%%%%%%
\subsection{Propostion~\ref{prop:theorem_existence} Proof Sketch}\label{pf:prop}
\textbf{Proposition 3.5}(Convergence rate of NAMEx-Momentum). \textit{There exist $\lr \in \mathbb{R}^+, \momCoeff \in \mathbb{C}$ so Algorithm~\ref{alg:simultaneous_complex} converges for Expert-Propagation NAMEx Momentum.}
\begin{proof}
We want to show that we can select $\gamma > 0$ and $\beta \in \mathbb{C}$ so that $\dynamics$ is contractive. That is, the spectral radius of $\dynamics$ is less than 1. Recall that,
\begin{equation} 
    \dynamics = \begin{bmatrix}
        \Re(\momCoeff) \identity & -\Im(\momCoeff) \identity & -\suma \identity\\
        \Im(\momCoeff) \identity & \Re(\momCoeff) \identity & 0\\
        \Re(\lr \momCoeff) \identity & -\Im(\lr \momCoeff) \identity & \identity - \lr\suma\identity\\
        \end{bmatrix}
\end{equation}

Set $\beta = r + ui$, we have:
\begin{equation}
    \det (\dynamics - x\identity) 
    = \det \left(\begin{bmatrix}
        &r-x &-u &-\suma \\
        &u   &r - x   &0 \\
        &\gamma r & -u   & 1 - \gamma \suma -x  
    \end{bmatrix} \otimes \identity\right)
    = \det 
    \left(\begin{bmatrix}
        &r-x &-u &-\suma \\
        &u   &r - x   &0 \\
        &\gamma r & -u   & 1 - \gamma \suma -x  
    \end{bmatrix} \right)^d
\end{equation}

\begin{align}
\begin{vmatrix}
r - x & -u & -\suma \\
u & r - x & 0 \\
\gamma r & -u & 1 - \gamma \suma - x
\end{vmatrix} 
& = r^2 + \suma u^2 - \gamma \suma u^2 + u^2 - x^3 - \gamma \suma x^2 + 2r x^2 + x^2 - r^2 x + \gamma \suma r x - 2r x - u^2 x \\
&= -x^3 + (-\gamma \suma + 2r + 1) x^2 + (-r^2 + \gamma \suma r - 2r - u^2) x + r^2 + (\suma - \gamma \suma + 1) u^2
\end{align}

We can further simplify this by choosing $\gamma = \dfrac{\hat\gamma}{\suma}$ to get the following polynomial,
\begin{equation}
    P(x) = -x^3 + (1 + 2r - \hat\gamma)x^2 + (-r^2 + \hat\gamma r - 2r - u^2)x + \big(r^2 + (1 + \suma - \hat\gamma)u^2\big)
\end{equation}
Using Fujiwara's bound \citep{fujiwaraber1916} we can determine one condition for $\rho(\dynamics) < 1$, that is
\begin{align}
|x| \leq 2 \max \left\{ |1 + 2r - \hat \gamma|, \sqrt{| -r^2 + \hat \gamma r - 2r - u^2|}, \sqrt[3]{\left\vert\dfrac{r^2 + (1 + \suma - \hat \gamma)u^2}{2}\right\vert} \right\} < 1
\end{align}
\begin{align}
\Rightarrow&
\label{eq:ineq}
  \begin{cases}
    \dfrac{-1}{2} < 1 + 2r - \hat \gamma < \dfrac{1}{2} \\[0.5cm]
    \dfrac{-1}{4} < r^2  - \hat \gamma r + 2r + u^2 < \dfrac{1}{4} \\[0.5cm]
    -\dfrac{1}{4} < r^2 + (1 + \suma - \hat \gamma)u^2 < \dfrac{1}{4}
\end{cases}  \\[0.5cm]
\Leftrightarrow&
\begin{cases}
    \dfrac{-1}{4} + \dfrac{\hat\gamma}{2} - \dfrac{1}{2} < r < \dfrac{1}{4} + \dfrac{\hat\gamma}{2} - \dfrac{1}{2} \\[0.5cm]
    \dfrac{-1}{4}-r^2 - \hat \gamma r - 2r  < u^2 < \dfrac{1}{4}-r^2 - \hat \gamma r - 2r \\[0.5cm]
    \dfrac{-1-4r^2}{1 + \suma - \hat \gamma} < u^2 < \dfrac{1-4r^2}{1 + \suma - \hat \gamma}
\end{cases}
\end{align}
We can consider the case when $\suma = 0.5$ and $\gamma = 2$. Figure \ref{fig: ineq} shows the region of $(r,u)$ that satisfies inequality system \eqref{eq:ineq}.

\begin{figure}
    \centering
    \includegraphics[width=0.8\linewidth]{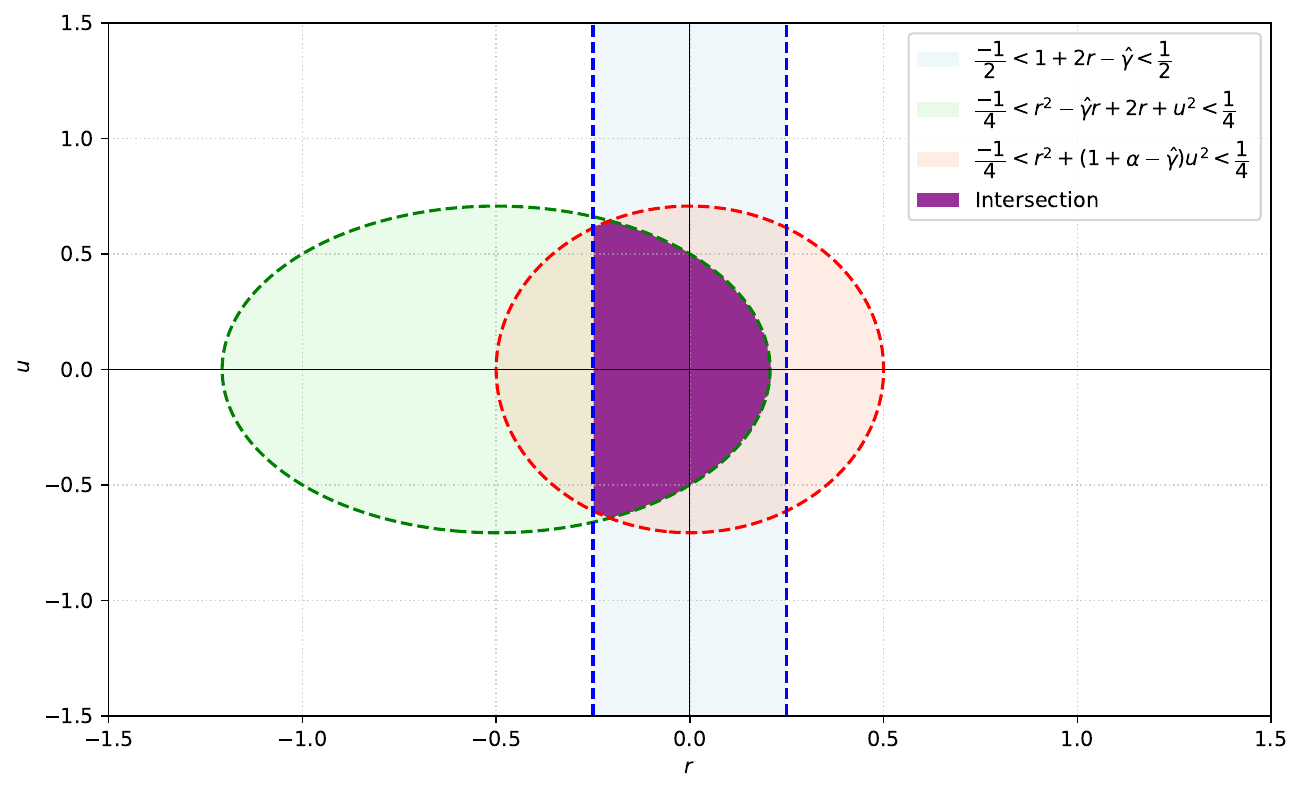}
    \vspace{-0.8em}
    \caption{Graph of system of inequality \eqref{eq:ineq} when $\suma = 0.5$ and $\gamma = 2$.}
    \label{fig: ineq}
\end{figure}
\end{proof}

\section{Additional Details on Datasets}
\label{app:dataset-model-details}

This section outlines the datasets and evaluation metrics employed in the experiments discussed in Section \ref{sec:experiment}.
\subsection{WikiText-103 Language Modeling}
\textbf{The WikiText-103} dataset contains a collection of Wikipedia articles designed to capture long-range contextual dependencies. It includes a training set with 28,475 articles, amounting to around 103 million words. The validation and test sets consist of 217,646 and 245,569 words, respectively, distributed across 60 articles per set.

\textbf{Model and baselines} We use the small and medium size Transformer as the baseline SMoE models. Our implementation is based on the codebase developed by \citep{pham2024competesmoeeffectivetraining} and \citep{teo2024momentumsmoe}. All model variants\textemdash SMoE, CAMEx, and NAMEx\textemdash employ 16 experts per layer, with SMoE utilizing top-1 (k=1) expert selection for each input. The models share a unified sparse routing mechanism, consisting of a linear layer to process the input, followed by Top-K and Softmax functions. Training is performed over 60 epochs for small models and 80 epochs for medium and large SMoE models.

\subsection{GLUE Text Classification}
The GLUE benchmark includes \textbf{SST-2} \citep{socher2013recursive} for sentiment analysis, \textbf{MRPC} \citep{dolan2005mrpc} for paraphrase detection and sentence similarity, \textbf{CoLA} \citep{warstadt2019cola} for evaluating grammatical acceptability, \textbf{STS-B} \citep{cer2017stsb} for sentence similarity measurement, \textbf{RTE} \citep{dagan2006rte} for logical reasoning, \textbf{QNLI} \citep{wang2019glue} for question-answer classification, and \textbf{MNLI} \citep{williams2018mnli} for assessing entailment between sentence pairs.

\textbf{Model and baselines}
We scale up T5 \citep{raffel2020t5} using SMoE upcycling \citep{komatsuzaki2023sparse}. For each task, we conduct a comprehensive hyperparameter search, exploring batch sizes \{$8$, $16$, $32$, $64$\} and learning rates \{$3e{-4}$, $1e{-4}$, $3e{-5}$, $1e{-5}$\} to identify the optimal fine-tuned configuration.

\subsection{ImageNet-1k Image Classification}
\textbf{ImageNet-1k}, introduced by \citep{deng2009imagenet}, is a widely used benchmark dataset comprising 1.28 million images for training and 50,000 images for validation across 1,000 categories. Performance is evaluated using top-1 and top-5 accuracy metrics.

For robustness evaluation, we utilize several specialized subsets.
\textbf{ImageNet-A}~\citep{hendrycks2021natural} focuses on 200 challenging classes from ImageNet-1k, specifically curated to fool classifiers, highlighting their vulnerability to real-world adversarial examples. \textbf{ImageNet-O}~\citep{hendrycks2021natural} contains out-of-distribution (OOD) samples derived from ImageNet-22k, carefully selected as instances that a ResNet-50 model misclassifies with high confidence. The primary evaluation metric for ImageNet-O is the area under the precision-recall curve (AUPR). Lastly, \textbf{ImageNet-R}~\citep{hendrycks2021many} consists of 30,000 artistic renditions representing 200 classes from ImageNet-1k, designed to assess model robustness to non-standard visual representations.

\textbf{Model and baselines}
% Our baselines contains the Swin-MoE \citep{liu2021Swin}, CAMEx, EP-CAMEx \citep{iclr2024camex} and our EP-CAMEx-Mom, NAMEx, and NAMEx-Mom.  variants.
For each MoE layer, we use \cref{alg:simultaneous_complex} to merge all experts into a base expert, except in the first MoE layer, where a base expert is initialized instead. Training configurations follow Swin-MoE \citep{liu2021Swin}, and the code is publicly available on \href{https://github.com/microsoft/Swin-Transformer/}{https://github.com/microsoft/Swin-Transformer/}. For NAMEx variants, we start with checkpoints pretrained on ImageNet-22k and fine-tune them on ImageNet-1k for 30 epochs.

\section{More experimental details}
This section provides additional details on the experimental setup, including model configurations, dataset processing, and training strategies used in our evaluation.

\subsection{WikiText-103 Language Modeling}
We follow the setup from \citep{pham2024competesmoeeffectivetraining} and \citep{teo2024momentumsmoe}, using both small and medium-scale Transformer architectures with 16 experts per layer. All variants (SMoE, CAMEx, EP-CAMEx, NAMEx) use Top-1 routing and share the same sparse gating mechanism. Training is conducted for 60 epochs (small) and 80 epochs (medium) with AdamW optimizer and cosine learning rate scheduling.

\subsection{GLUE Benchmark Fine-Tuning}
For text classification, we fine-tune T5-base models upcycled with SMoE layers. We conduct grid searches over batch sizes {8, 16, 32, 64} and learning rates {$3 \times 10^{-5}$, $1 \times 10^{-4}$, $3 \times 10^{-4}$}. Each result is averaged over five seeds to ensure statistical stability. All SMoE variants employ 8 experts per layer and share the same routing logic.

\subsection{ImageNet-1k and Corrupted Variants}
For vision experiments, we fine-tune Swin-MoE-Small on ImageNet-1k for 30 epochs using batch size 96 and learning rate $1 \times 10^{-4}$. NAMEx variants initialize $\mathbf{E}_m$ in the first MoE layer and perform merging across all others via Algorithm~\ref{alg:namex} or Algorithm~\ref{alg:simultaneous_complex}. For robustness, we evaluate zero-shot generalization on ImageNet-A, ImageNet-O, and ImageNet-R. All reported results are averaged over three runs.

\subsection{Implementation and Infrastructure}
Experiments are implemented in PyTorch and trained on 4–8 A100 GPUs depending on model size. We use automatic mixed precision (AMP) for memory efficiency. All hyperparameters, data augmentations, and merging schedules are described in Appendix~\ref{app:dataset-model-details}.

\section{Additional Experimental Results}
\label{app:add-exp}
\subsection{Zero-shot and Finetuning on Qwen1.5-MoE (14B parameters)}
\label{app:qwen}
To assess scalability, we integrate NAMEx-Full into Qwen1.5-MoE (14B) and evaluate it on three challenging benchmarks: MMLU \citep{hendrycks2021measuring}, GSM8K \citep{cobbe2021trainingverifierssolvemath}, and ARC \citep{clark2018thinksolvedquestionanswering} in both zero-shot and fine-tuned settings. In the fine-tuned setting, all models are fine-tuned on the SmolTalk~\citep{allal2025smollm} dataset before evaluation. As reported in Tables~\ref{tab:qwen_zeroshot} and~\ref{tab:qwen_finetune}, NAMEx-Full consistently outperforms both the baseline and EP-CAMEx across routing schemes and tasks, highlighting its robustness, scalability, and architectural generality.

\begin{table}[h]
  \centering
  \caption{Zero\,-\,shot results for \qwen{} variants.}
  \vspace{-0.8em}
  \setlength{\tabcolsep}{6pt}
  \resizebox{0.6\linewidth}{!}{
  \begin{tabular}{l l r r r}
    \toprule
   \textbf{Routing Strategy} & \textbf{Model} & \textbf{MMLU} & \textbf{GSM8K} & \textbf{ARC} \\
    \midrule
    Linear & \qwen{} & 61.28 & 60.12 & 50.77 \\
            & \epcamex{} & 61.54 & 60.23 & 50.83 \\
            \rowcolor{blue!10}
            & \textbf{\namex{}\,-\,Full} & \textbf{61.87} & \textbf{60.55} & \textbf{50.95} \\
            \midrule
    Cosine  & \qwen{} & 61.10 & 59.88 & 50.60 \\
            & \epcamex{} & 61.40 & 60.00 & 50.68 \\
            \rowcolor{blue!10}
            & \textbf{\namex{}\,-\,Full} & \textbf{61.85} & \textbf{60.52} & \textbf{50.93} \\
            \midrule
    Stable\,-\,MoE & \qwen{} & 61.35 & 60.22 & 50.81 \\
            & \epcamex{} & 61.60 & 60.35 & 50.89 \\
            \rowcolor{blue!10}
            & \textbf{\namex{}\,-\,Full} & \textbf{61.90} & \textbf{60.60} & \textbf{50.96} \\
    \bottomrule
  \end{tabular}
}
  \label{tab:qwen_zeroshot}
\end{table}

\begin{table}[h!]
\centering
\caption{Results after fine-tuning on SmolTalk.}
\vspace{-0.8em}
\resizebox{0.6\linewidth}{!}{
\begin{tabular}{llccc}
\toprule
\textbf{Routing Strategy} & \textbf{Model} & \textbf{MMLU} & \textbf{GSM8K} & \textbf{ARC} \\
\toprule
\multirow{3}{*}{Linear} 
    & Qwen1.5-MoE   & 61.50 & 60.52 & 51.12 \\
    & EP-CAMEx      & 61.74 & 60.63 & 51.23 \\
    \rowcolor{blue!10}
    & \textbf{NAMEx-Full}    & \textbf{62.10} & \textbf{61.00} & \textbf{51.35} \\
\midrule
\multirow{3}{*}{Cosine} 
    & Qwen1.5-MoE   & 61.30 & 60.28 & 50.95 \\
    & EP-CAMEx      & 61.60 & 60.50 & 51.10 \\
    \rowcolor{blue!10}
    & \textbf{NAMEx-Full}    & \textbf{62.05} & \textbf{60.95} & \textbf{51.30} \\
\midrule
\multirow{3}{*}{Stable-MoE Routing} 
    & Qwen1.5-MoE   & 61.60 & 60.65 & 51.20 \\
    & EP-CAMEx      & 61.85 & 60.80 & 51.30 \\
    \rowcolor{blue!10}
    & \textbf{NAMEx-Full}    & \textbf{62.15} & \textbf{61.10} & \textbf{51.45} \\
\bottomrule
\end{tabular}
}
\label{tab:qwen_finetune}
\end{table}

\section{Additional Empirical Analysis}
\label{appx:add-emp-analysis}

\subsection{Overhead and Scalability}
\label{sec:overhead}
% \dvn{Per\,-\,batch wall\,-\,clock includes forward/backward and NBS. NBS overhead is tunable via the update frequency $\Delta \ell$.}
\begin{table}[h]
  \centering
  \caption{Impact of NBS update frequency ($\Delta \ell$). Per\,-\,batch wall\,-\,clock.}
  \vspace{-0.8em}
  \setlength{\tabcolsep}{6pt}
  \resizebox{0.8\linewidth}{!}{
  \begin{tabular}{c C{1.2cm} C{1.2cm} C{1.2cm} C{1.2cm} C{2.5cm} C{1.9cm}}
    \toprule
    $\Delta \ell$ & SST\,-\,2 & MRPC & STS\,-\,B & RTE & Mean Batch (s) & \% NBS Time \\
    \midrule
    1 & 94.88 & 92.85 & 90.32 & 77.26 & 4.70 & 85.96\% \\
    2 & 94.95 & 92.38 & 90.37 & 76.89 & 2.29 & 71.18\% \\
    5 & 95.18 & 92.09 & 90.13 & 77.98 & 1.14 & 42.11\% \\
    $L$ (first layer) & 94.46 & 92.01 & 90.12 & 75.09 & \textbf{0.69} & \textbf{4.35\%} \\
    \bottomrule
  \end{tabular}
}
  \label{tab:delta}
\end{table}

% \dvn{\textbf{Throughput \& TFLOPs (training).}}
\begin{table}[h]
  \centering
  \caption{Training compute and throughput. Inference is unchanged relative to baselines.}
  \vspace{-0.8em}
  \setlength{\tabcolsep}{6pt}
  \resizebox{0.7\linewidth}{!}{
  \begin{NiceTabular}{l r r r}
    \toprule
    Model & Train TFLOPs & Infer TFLOPs & Train Throughput (tok/s) \\
    \midrule
    \smoe{} & 13.95 & 4.65 & 22,236 \\
\RowStyle{}\textbf{\smoe{}\,(Top-2)} & 18.32 & 7.44 & 17,898 \\
    \RowStyle{} SMEAR & 13.95 & 4.65 & 22,236 \\
    \camex{} & 14.30 & 4.65 & 21,982 \\
    \epcamex{} & 14.25 & 4.65 & 21,982 \\
    \namex{} & 14.25 & 4.65 & 21,995 \\
    \textbf{\namex{}\,-\,Full} & \textbf{14.25} & \textbf{4.65} & \textbf{21,897} \\
    EP\,-\,CAMEx\,-\,Mom & 14.25 & 4.65 & 21,872 \\
    \textbf{\namex{}\,-\,Full\,-\,Mom} & \textbf{14.25} & \textbf{4.65} & \textbf{21,783} \\
    \bottomrule
  \end{NiceTabular}
}
  \label{tab:throughput}
\end{table}

In Tab. \ref{tab:delta}, we provide a detailed runtime cost analysis, “Mean Batch Runtime (sec)” includes the entire forward/backward pass and NBS step. "\% NBS Occupied" isolates the share of time spent solving NBS.

 In Tab. \ref{tab:throughput}, we provide a comparison of training TFLOPs, inference TFLOPs, and training throughput across baselines and our proposed variants in Table 2 below. Notably, NAMEx-Full achieves competitive throughput (21,897 tokens/sec), closely matching CAMEx and EP-CAMEx despite the added complexity of solving the Nash system. \tmn{While NAMEx and NAMEx-Mom show slightly lower throughput due to the large number of NBS iterations (20 iters), NAMEx-Full-Mom restores much of the efficiency by reducing the number of NBS iterations (2 iters per layer).} Currently, the main overhead arises from transferring data to the CPU for solving the Nash Bargaining update step. However, this implementation detail is orthogonal to the algorithm itself and can be optimized via GPU-native solvers or batching strategies. Overall, the results demonstrate that NAMEx introduces minimal overhead and remains practical for large-scale MoE training, supporting its applicability to future deployments involving more experts per layer.
% \dvn{Overhead currently stems from CPU\,-\,bound convex solves; GPU\,-\,native or approximate solvers are a straightforward engineering improvement.}

\subsection{Convergence Analysis}
\label{sec:convergence}

\begin{figure}[h]
  \centering
  \includegraphics[width=0.8\linewidth]{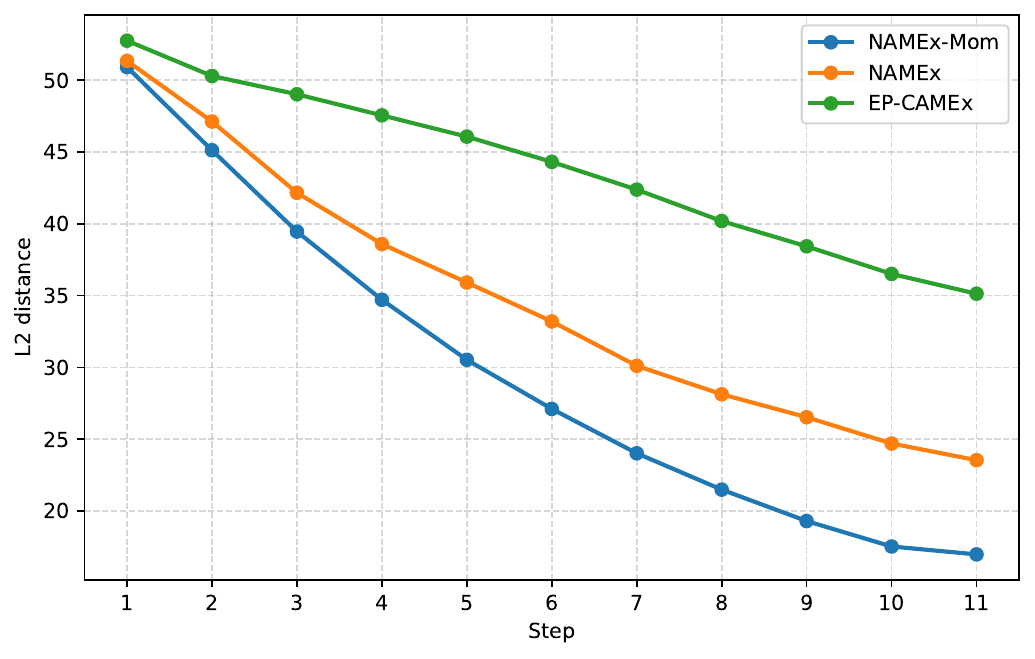}
  \vspace{-0.8em}
  \caption{L2 distance between expert updates across training steps (T5-Base, 12 MoE layers). Lower values indicate better stability. The figure shows that NAMEx converges faster and more stably than EP-CAMEx.}
  \label{fig:l2_distance_across_steps}
\end{figure}

To validate the motivation for complex momentum, we provide empirical convergence analysis in Fig.~\ref{fig:l2_distance_across_steps}, which tracks the L2 distance between updates of base experts across training steps (T5-Base, 12 MoE layers). As illustrated, NAMEx--with or without momentum--shows a noticeably steeper decline in update distances, indicating faster convergence and more stable expert updates compared to EP-CAMEx. This directly supports our hypothesis that complex momentum enhances convergence stability and efficiency during expert merging.

\subsection{Swin\,-\,MoE\,-\,S: 90\,-\,Epoch Fine\,-\,Tuning}
\label{sec:swin}
\begin{table}[h]
  \centering
  \caption{ImageNet\,-\,1K Top\,-\,1 from epochs 86--90. Our best final performance is \textbf{85.046\%}.}
  \vspace{-0.8em}
  \setlength{\tabcolsep}{4pt}
  \resizebox{1.0\linewidth}{!}{
  \begin{tabular}{r r r r r r r r}
    \toprule
    Epoch & \namex{}\,-\,Mom & \namex{} & EP\,-\,\camex{}\,-\,Mom & \epcamex{} & \namex{}\,-\,Full & \namex{}\,-\,Full\,-\,Mom & Swin\,-\,MoE \\
    \midrule
     86 & 84.466 & 84.264 & 83.862 & 82.238 & 84.722 & \textbf{85.022} & 83.435 \\
     87 & 84.504 & 84.252 & 83.868 & 82.286 & 84.728 & \textbf{85.028} & 83.400 \\
     88 & 84.502 & 84.240 & 83.854 & 82.234 & 84.734 & \textbf{85.034} & 83.379 \\
     89 & 84.540 & 84.228 & 83.860 & 82.282 & 84.740 & \textbf{85.040} & 83.413 \\
     90 & 84.518 & 84.216 & 83.806 & 82.230 & 84.746 & \textbf{85.046} & 83.415 \\
    \bottomrule
  \end{tabular}
}
  \label{tab:swin90}
\end{table}

Tab. \ref{tab:swin90} summarizes the top-1 accuracy on ImageNet-1K from epochs 86 to 90, these results confirm that our models continue to improve with more training, and that Namex-full-Mom reaches a final top-1 accuracy of 85.046\%. This demonstrates both competitive final performance and strong convergence behavior. Note that due to the difference in number of GPU being used, it seems that we could not reproduce the 84.5\% result as reported by the official repo of Swin-MoE.

\subsection{Other Results}
\label{sec:other_results}
\cref{fig:wiki-line} presents the evaluation perplexity on WikiText-103 during training. NAMEx-Mom achieves the lowest validation and test perplexities in both small- and medium-scale pre-training, outperforming SMoE and CAMEx-based methods. Across all settings, Nash variants, including NAMEx and NAMEx-Mom, consistently surpass their counterparts, demonstrating the effectiveness of Nash bargaining and momentum mechanisms.

\cref{fig:switch-heatmap} and \cref{fig:swin-heatmap} visualize the cosine similarity between experts ouput at all SMoE layers indicating a complex dynamic of how experts at progressive layers interact with each other. \emph{This observation suggests that the behavior of experts cannot be captured optimally by using simple averaging as of the previous work}. Instead, a more effective strategy for determining the merging coefficients should account for the experts' dynamics at each specific layer.
\label{app:extra_emp_analysis}
\begin{figure}[t]
    \centering
    \includegraphics[width=0.6\linewidth]{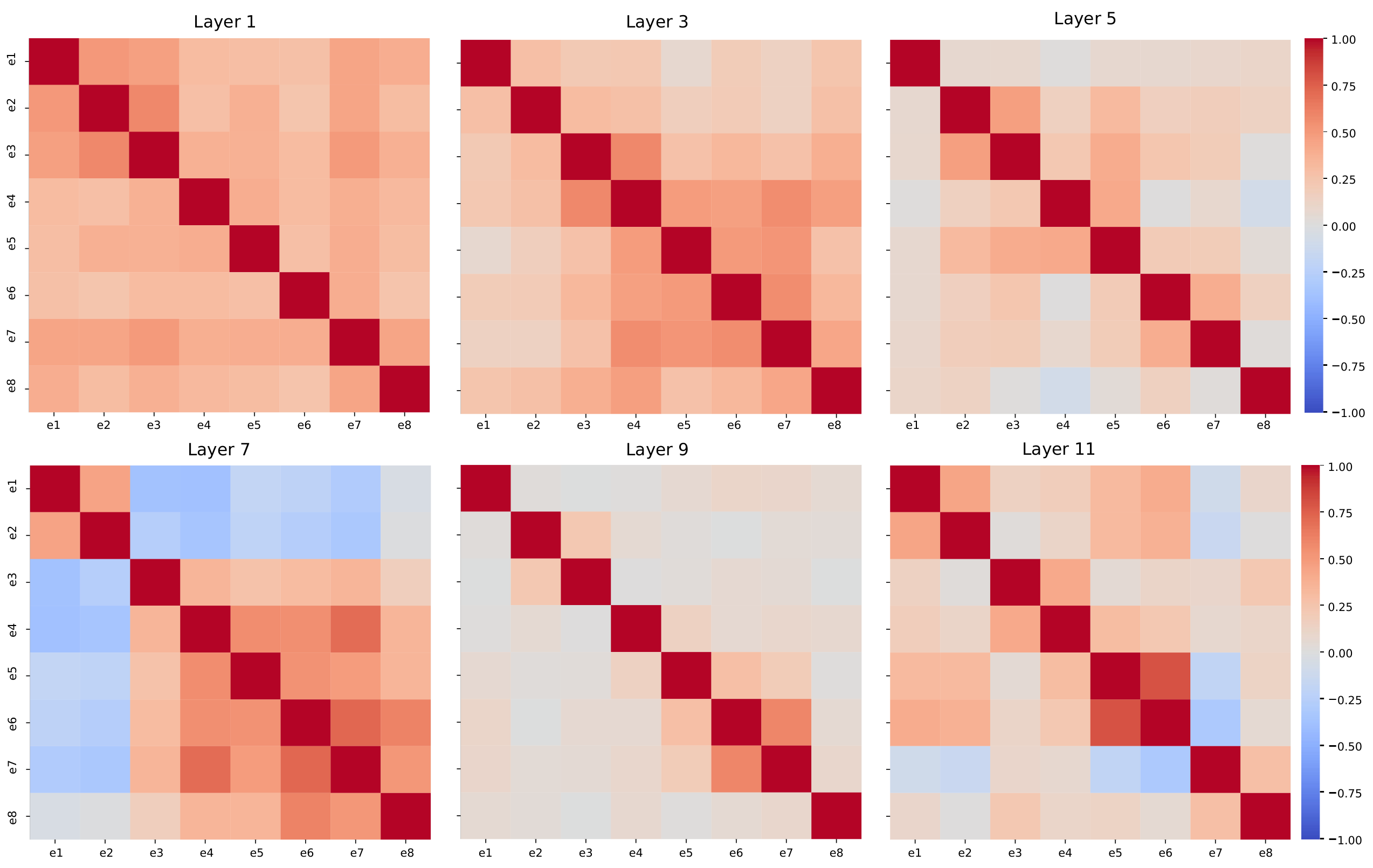}
    \vspace{-0.8em}
    \caption{Cosine similarity between expert outputs in Switch-Transformers.}
    \label{fig:switch-heatmap}
\end{figure}

\begin{figure}[t]
    \centering
    \includegraphics[width=0.6\linewidth]{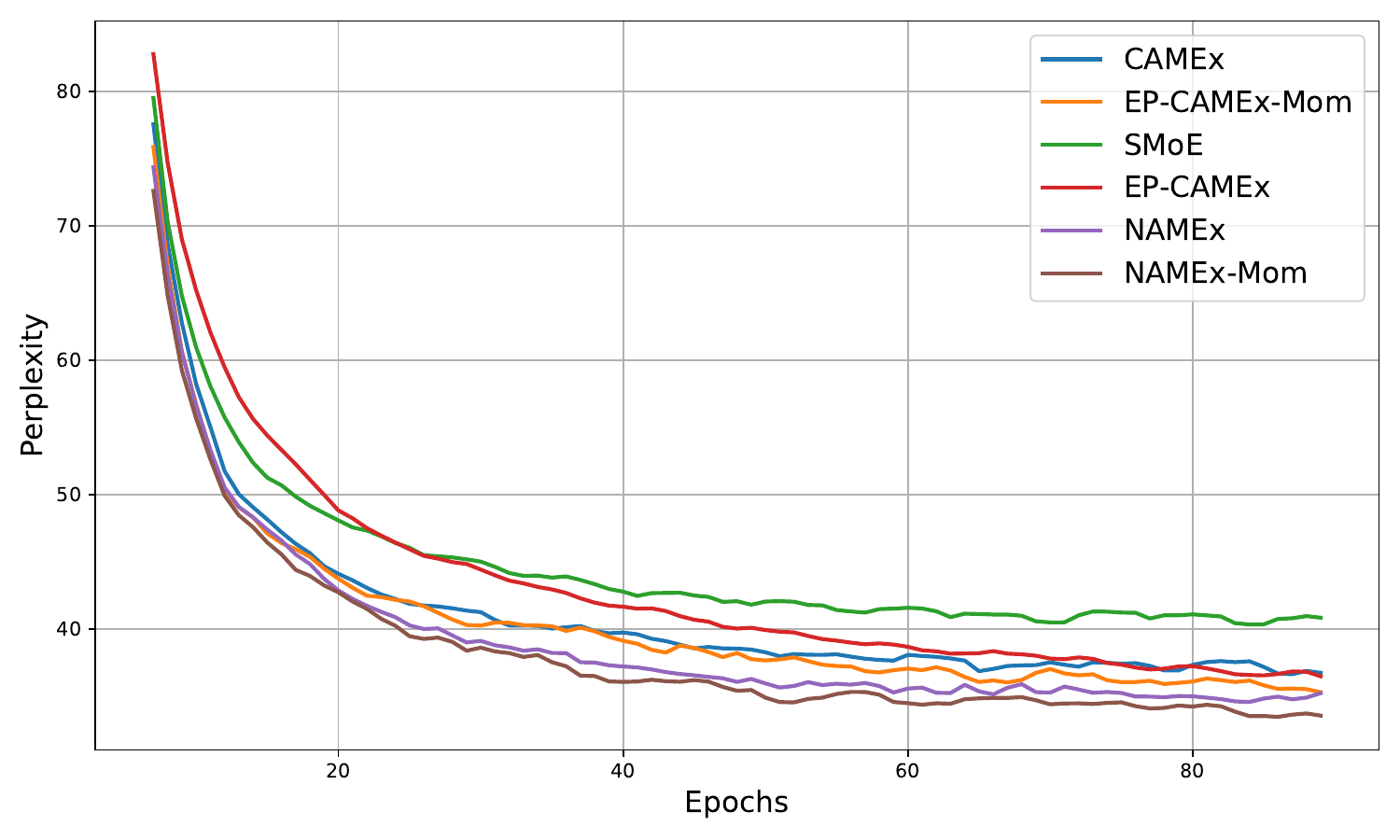}
    \vspace{-0.8em}
    \caption{5-Period Moving Average of Perplexity of different Transformers-medium variants on WikiText-103}
    \label{fig:wiki-line}
\end{figure}
\begin{figure}[t]
    \centering
    \includegraphics[width=0.45\linewidth]{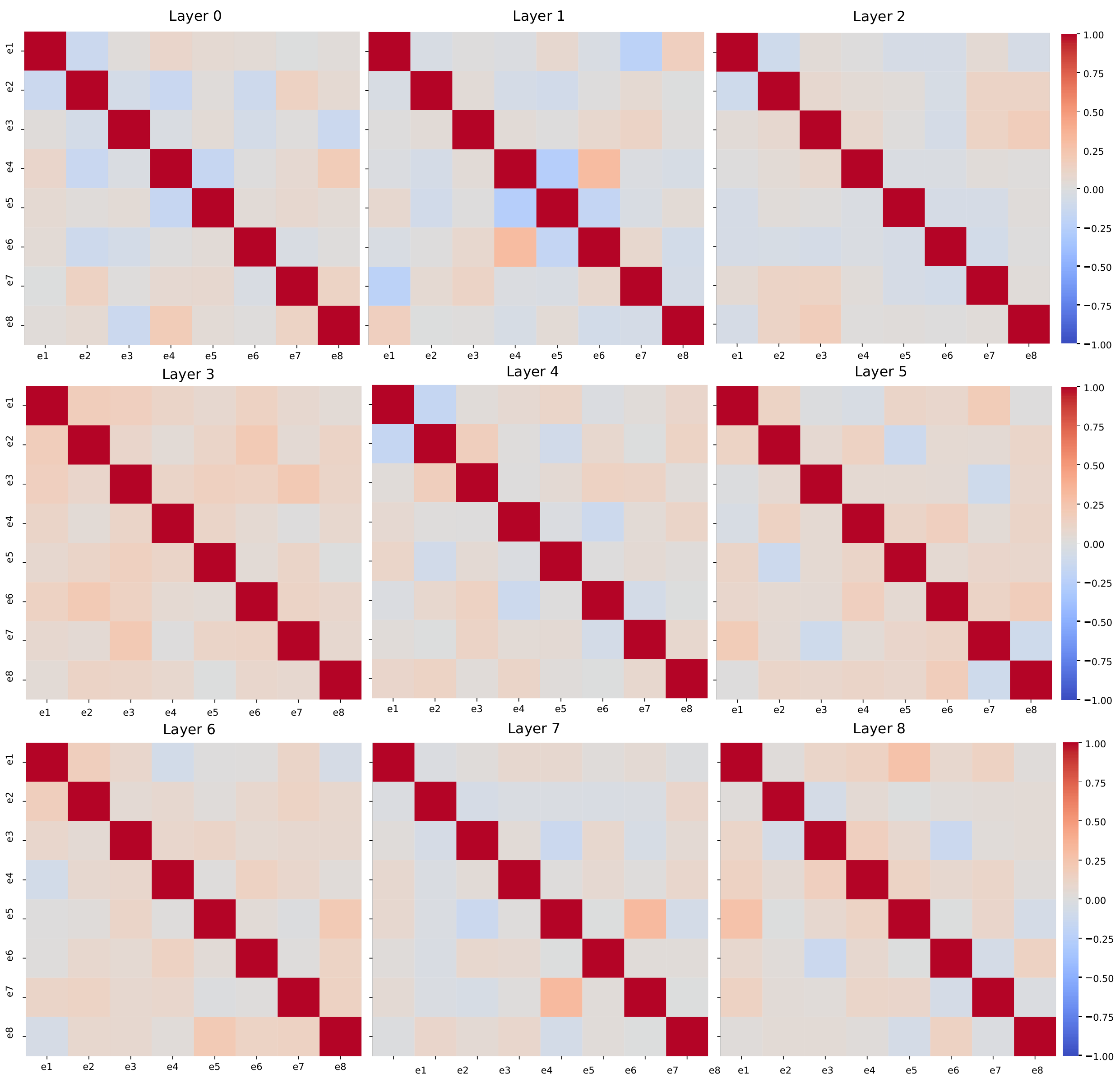}
    \vspace{-0.8em}
    \caption{Cosine similarity between expert outputs in Swin-MoE.}
    \label{fig:swin-heatmap}
\vspace{-0.15in}
\end{figure}
\begin{figure}[t]
    \centering
    \includegraphics[width=0.45\linewidth]{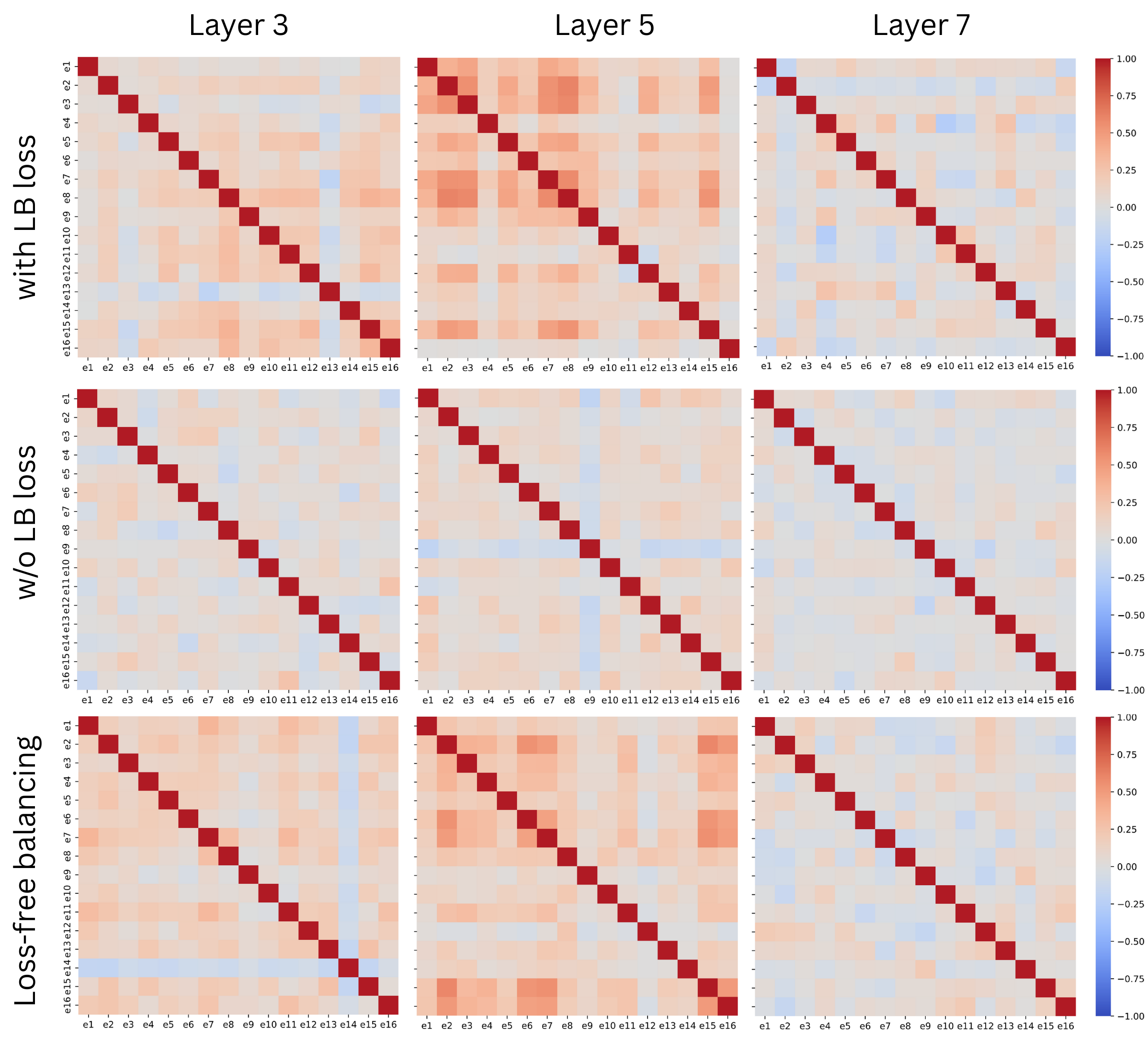}
    % \vspace{-0.8em}
    \caption{We compared expert interaction patterns under three settings: with Load Balancing loss, without Load Balancing loss, and loss-free balancing (in the sense of \citep{wang2025auxiliarylossfree}).}
    \label{fig:lb-and-nlb}
    \vspace{-0.1in}
\end{figure}
\begin{figure}[t]
    \centering
    \includegraphics[width=0.9\textwidth]{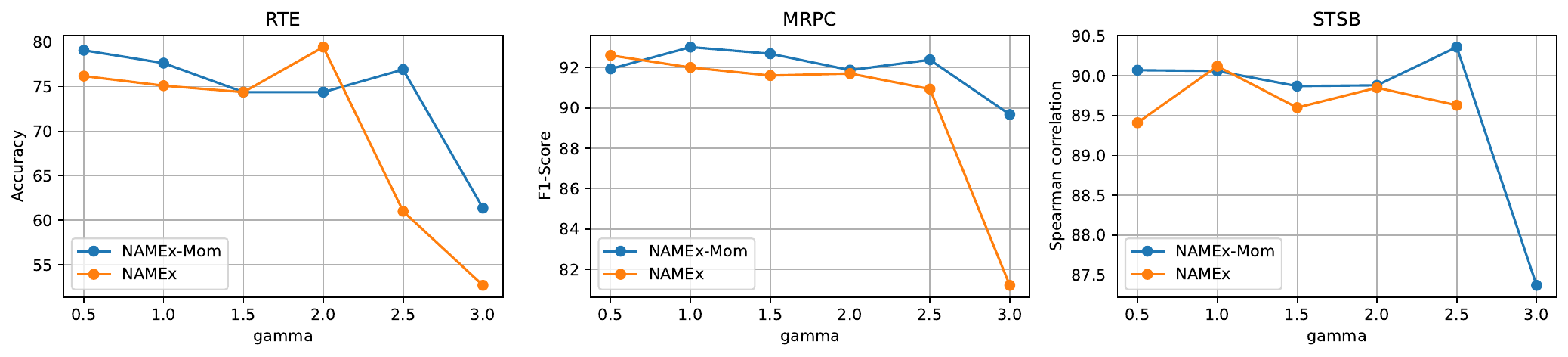}
    % \vspace{-0.8em}
    \caption{Impact of different values of step-size $\gamma$ on NAMEx and NAMEx-Mom performance. Overall, the optimal setting lies within the range $[0.5,2]$. For $\gamma > 2$, the performance drops significantly, which may indicate an overshooting phenomenon.}
    % \caption{Impact of different values of step-size $\gamma$ on NAMEx and NAMEx-Mom performance. Overall, the optimal setting lines within range $[0.5,2]$. For $\gamma > 2$, the performance significantly drops which might imply the overshooting phenomenon.}
    \label{fig:gamma}
\end{figure}
As shown in Figure \ref{fig:lb-and-nlb}, the cooperative/competitive dynamics (as reflected in the off-diagonal correlation values) are much more visible when expert load is balanced--either through Load Balancing loss or loss-free mechanisms. In contrast, when training without Load Balancing loss, many experts appear less specialized, and the interaction patterns become weaker and less structured.

One hypothesis is that, without balanced token routing, some experts may be underused or even become inactive, which diminishes the emergence of meaningful cooperative or competitive behavior. Therefore, balanced expert load is not only important for preventing dead experts but also plays a crucial role in making such dynamics observable and analyzable.
\begin{figure}
    \centering
    \includegraphics[width=0.8\linewidth]{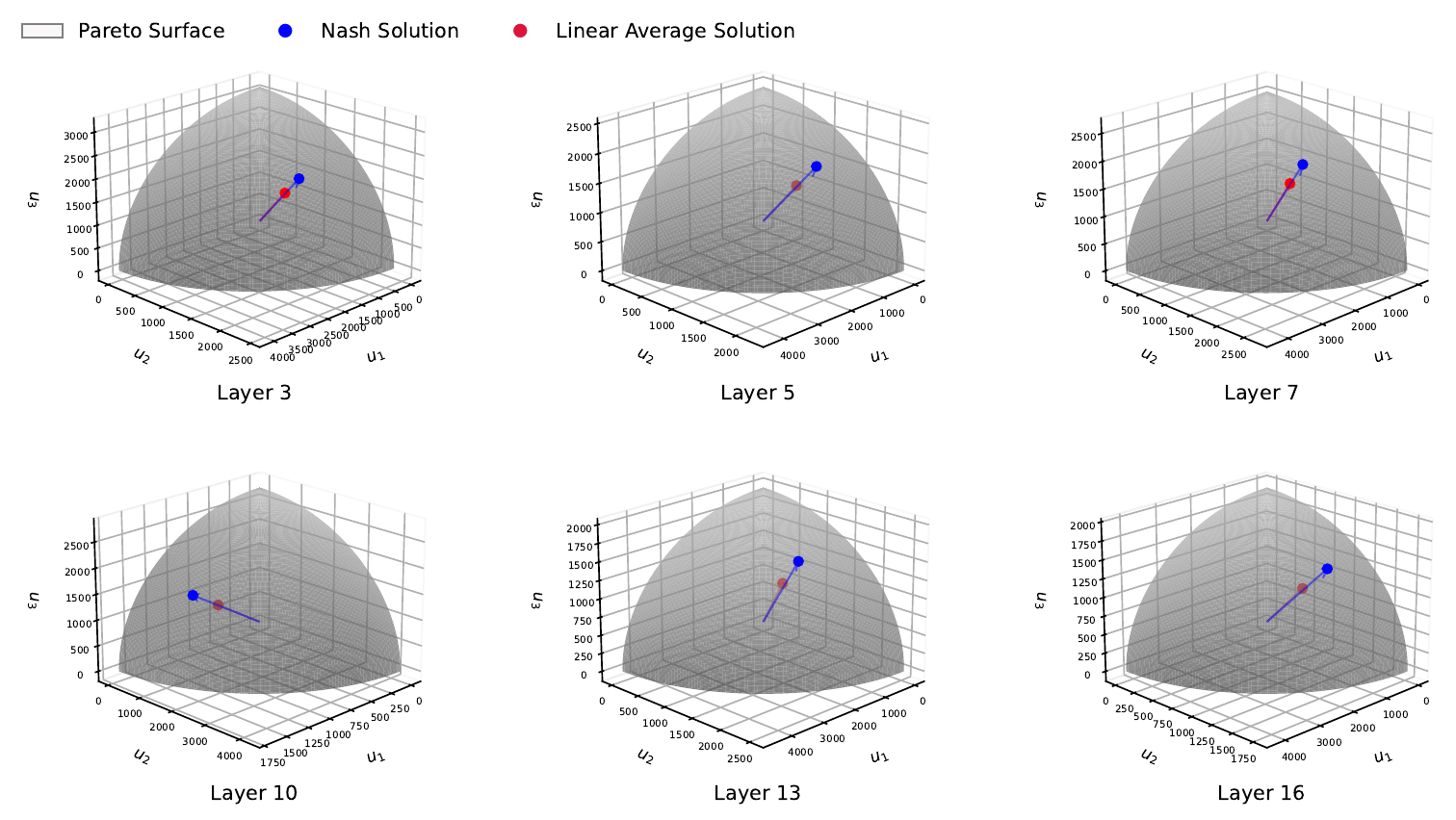}
    \vspace{-0.8em}
    \caption{Visualization of expert utility trade-offs accross multiple SMoE layers. Each subplot corresponds to a different layer, with arrows indicating merging directions in the 3D utility space. The blue arrow represents the Nash Bargaining solution (NAMEx), while the red arrow denotes the average-based merging direction. The hemispherical surface is uniformly sampled to illustrate the feasible utility region. Across layers, the Nash direction consistently steers toward more balanced expert cooperation compared to the naive average.}
    \label{fig:pareto-plot}
\end{figure}

\begin{table}[t!]
    \centering
    \caption{Pretraining and zero-shot results for NAMEx vs. ACMoE Top-1/Top-2 on ImageNet-1k and corrupted variants.}
    % \vspace{-0.8em}
    \label{tab:imagenet-acmoe-results}
    {
    \resizebox{0.8\linewidth}{!}{
    \begin{tabular}{lcccccc}
        \toprule
        \textbf{Model} & \textbf{Params} & \textbf{Acc@1} & \textbf{Acc@5} & \textbf{INet-O} & \textbf{INet-A} & \textbf{INet-R} \\
        \midrule
       ACMoE-Top 1 & 280M & $75.39$ & $92.56$ & $18.45$ & $7.13$ & $30.85$ \\
       ACMoE-Top 2 & 280M & $76.31$ & $93.14$ & $19.55$ & $9.42$ & $32.35$ \\
        \midrule
        NAMEx            & 280M & $76.85$ & $93.40$ & $20.11$ & $9.90$  & $32.93$ \\
        NAMEx-Full       & 280M & $77.42$ & $93.85$ & $20.69$ & $10.46$ & $33.44$ \\
        \rowcolor{blue!10}
        NAMEx-Full-Mom   & 280M & $\mathbf{78.15}$ & $\mathbf{94.23}$ & $\mathbf{21.16}$ & $\mathbf{11.02}$ & $\mathbf{33.95}$ \\
        \bottomrule
    \end{tabular}
    }}
    \vspace{-0.2in}
\end{table}

{ In \cref{tab:imagenet-acmoe-results}, across all ImageNet variants, the NAMEx-based models consistently outperform the ACMoE Top-1 and Top-2 baselines.  In particular, NAMEx-Full and NAMEx-Full-Mom set new best accuracies on both in-distribution metrics (Acc@1 and Acc@5)  and out-of-distribution benchmarks (INet-O, INet-A, INet-R). This underlines the strong generalization ability of NAMEx.  Even with the same parameter budget, NAMEx variants deliver better robustness to corruptions and distribution shifts.}
\begin{table}[t!]
\centering
\caption{Performance comparison across number of NBS solving iterations for the Linear router in NAMEx-Full config (Qwen1.5-MoE). All results to be filled; we keep the same marginal-gap setup. Throughout experiments, use 2 CCP iterations per layer for NAMEx-Full (chosen config below).}
\setlength{\tabcolsep}{6pt}

\resizebox{0.8\linewidth}{!}{
\begin{tabular}{llcccccc}
\toprule
\multirow{2}{*}{\textbf{Model}} & \multirow{2}{*}{\textbf{No. Iterations}} 
& \multicolumn{3}{c}{\textbf{Zero-Shot}} 
& \multicolumn{3}{c}{\textbf{Fine-tuned (SmolTalk)}} \\ 
\cmidrule(lr){3-5} \cmidrule(lr){6-8}
& & \textbf{MMLU} & \textbf{GSM8K} & \textbf{ARC} 
  & \textbf{MMLU} & \textbf{GSM8K} & \textbf{ARC} \\
\midrule
\multirow{5}{*}{Qwen1.5-MoE} 
& 2 (Chosen Config)  & 61.87 & 60.55 & 50.95 & 62.10 & 61.00 & 51.35 \\
& 5                   & 61.70 & 60.55 & 50.94 & 62.20 & 61.05 & 51.31 \\
& 20                  & 61.94 & 60.57 & 50.96 & 62.14 & 61.03 & 51.34 \\
& 40                  & 61.92 & 60.62 & 51.01 & 62.22 & 61.05 & 51.46 \\
& 60                  & 61.81 & 60.48 & 51.08 & 62.15 & 60.98 & 51.39 \\
\bottomrule
\end{tabular}
}
\label{tab:routing_linear_only_qwen15_moe}
\end{table}

\begin{table}[t!] 
\centering 
\caption{Performance comparison across routing strategies and models on MMLU, GSM8K, and ARC benchmarks for \qwen{} variants. Left: zero-shot results. Right: fine-tuned \namex{}\,-\,Full variants on SmolTalk.} \setlength{\tabcolsep}{6pt} 
\resizebox{\linewidth}{!}{ 
\begin{NiceTabular}{llcccccc} 
\toprule \multirow{2}{*}{\textbf{Routing Strategy}} & \multirow{2}{*}{\textbf{Model}} & \multicolumn{3}{c}{\textbf{Zero-Shot}} & \multicolumn{3}{c}{\textbf{Fine-tuned (SmolTalk)}} \\ 
\cmidrule(lr){3-5} \cmidrule(lr){6-8} & & \textbf{MMLU} & \textbf{GSM8K} & \textbf{ARC} & \textbf{MMLU} & \textbf{GSM8K} & \textbf{ARC} \\ 
\midrule \multirow{4}{*}{Linear} & \qwen{} & 61.28 & 60.12 & 50.77 & 61.50 & 60.52 & 51.12 \\ & \epcamex{} & 61.54 & 60.23 & 50.83 & 61.74 & 60.63 & 51.23 \\ 
\rowcolor{blue!10} & \textbf{\namex{}\,-\,Full} ($0$ disagreement point) & \textbf{61.87} & \textbf{60.55} & \textbf{50.95} & \textbf{62.10} & \textbf{61.00} & \textbf{51.35} \\ \RowStyle{} & \textbf{\namex{}\,-\,Full} (mean disagreement point) & \textbf{61.78} & \textbf{60.57} & \textbf{51.23} & \textbf{61.67} & \textbf{61.04} & \textbf{51.25} \\ \midrule \multirow{4}{*}{Cosine} & \qwen{} & 61.10 & 59.88 & 50.60 & 61.30 & 60.28 & 50.95 \\ & \epcamex{} & 61.40 & 60.00 & 50.68 & 61.60 & 60.50 & 51.10 \\ \rowcolor{blue!10} & \textbf{\namex{}\,-\,Full} ($0$ disagreement point) & \textbf{61.85} & \textbf{60.52} & \textbf{50.93} & \textbf{62.05} & \textbf{60.95} & \textbf{51.30} \\ \RowStyle{} & \textbf{\namex{}\,-\,Full} (mean disagreement point) & \textbf{61.86} & \textbf{60.45} & \textbf{50.77} & \textbf{62.01} & \textbf{60.81} & \textbf{51.37} \\ \midrule \multirow{4}{*}{Stable-MoE} & \qwen{} & 61.35 & 60.22 & 50.81 & 61.60 & 60.65 & 51.20 \\ & \epcamex{} & 61.60 & 60.35 & 50.89 & 61.85 & 60.80 & 51.30 \\ \rowcolor{blue!10} & \textbf{\namex{}\,-\,Full} ($0$ disagreement point) & \textbf{61.90} & \textbf{60.60} & \textbf{50.96} & \textbf{62.15} & \textbf{61.10} & \textbf{51.45} \\ \RowStyle{} & \textbf{\namex{}\,-\,Full} (mean disagreement point) & \textbf{61.88} & \textbf{60.64} & \textbf{51.03} & \textbf{62.15} & \textbf{61.11} & \textbf{51.35} \\ 
\bottomrule 
\end{NiceTabular} } 
\label{tab:qwen_routing_merge} 
\end{table}

\section{Broader Impacts}
\label{app:broader_impacts}
NAMEx proposes a principled, game-theoretic approach to expert merging in Sparse Mixture-of-Experts (SMoE) models, addressing key limitations of heuristic and curvature-based methods. By leveraging Nash Bargaining, it enables more balanced and interpretable parameter integration, particularly in settings with conflicting or specialized expert knowledge. This has direct implications for scalable deployment, as NAMEx can reduce the memory and compute footprint of large SMoE models while preserving performance. The addition of complex momentum enhances convergence stability during expert propagation, offering a robust framework for layered expert interaction. These contributions may prove valuable for future research in modular deep learning, federated optimization, and transfer learning, where efficient and fair expert combination is critical.

\end{document}